\newcommand{\bmx}{\bm{x}}
\newcommand{\bmy}{\bm{y}}
\newcommand{\bmz}{\bm{z}}
\newcommand{\bmr}{\bm{r}}
\newcommand{\bmf}{\bm{f}}
\newcommand{\ojzj}{OneJumpZeroJump}
\newtheorem{theorem}{Theorem}
\newtheorem{lemma}[theorem]{Lemma}
\newtheorem{definition}{Definition}
\title{A Theoretical Perspective on Why
 Stochastic Population Update Needs an Archive \\ in Evolutionary Multi-objective Optimization}
\author{
Shengjie Ren$^{1,2}$
\and
Zimin Liang$^{3}$\and
Miqing Li$^{3}$\And
Chao Qian$^{1,2}$
\affiliations
$^1$National Key Laboratory for Novel Software Technology, Nanjing University, Nanjing 210023, China\\
$^2$School of Artificial Intelligence, Nanjing University, Nanjing 210023, China\\
$^3$School of Computer Science, University of Birmingham,
Birmingham B15 2TT, U.K.
\emails
rensj@lamda.nju.edu.cn, zxl525@student.bham.ac.uk, m.li.8@bham.ac.uk, qianc@nju.edu.cn
}
\begin{document}

\maketitle

\begin{abstract}

  Evolutionary algorithms (EAs) have been widely applied to multi-objective optimization due to their population-based nature. Population update, a key component in multi-objective EAs (MOEAs), is usually performed in a greedy, deterministic manner. However, recent studies have questioned this practice and shown that stochastic population update (SPU), which allows inferior solutions have a chance to be preserved, can help MOEAs jump out of local optima more easily. 
  Nevertheless, SPU risks losing high-quality solutions, potentially requiring a large population. Intuitively, a possible solution to this issue is to introduce an archive that stores the best solutions ever found.
  In this paper, we theoretically show that using an archive allows a small population and may enhance the search performance of SPU-based MOEAs. We examine two classic algorithms, SMS-EMOA and NSGA-II, on the bi-objective problem OneJumpZeroJump, and prove that using an archive can reduce the expected running time upper bound (even exponentially). The comparison between SMS-EMOA and NSGA-II also suggests that the $(\mu+\mu)$ update mode may be more suitable for SPU than the $(\mu+1)$ update mode. We also validate our findings empirically.
We hope this work may provide theoretical support to explore different ideas of designing algorithms in evolutionary multi-objective optimization.

\end{abstract}

\section{Introduction}

Multi-objective optimization deals with scenarios where multiple objectives must be optimized simultaneously. They are very common in real-world applications. Since the objectives of a multi-objective optimization problem (MOP) are usually conflicting, there does not exist a single optimal solution, but instead a set of solutions which represent different optimal trade-offs between these objectives, called Pareto optimal solutions. The objective vectors of these solutions form the Pareto front. The goal of multi-objective optimization is to find the Pareto front or a good approximation of it. 

Evolutionary algorithms (EAs), a kind of randomized heuristic optimization algorithms inspired by natural evolution, have been found well-suited to MOPs due to their population-based nature. Their widespread applications are across various real-world domains~\cite{deb2001book,qian19el}. Notably, there have been developed a multitude of well-established multi-objective EAs (MOEAs), including the non-dominated sorting genetic algorithm II (NSGA-II)~\cite{deb-tec02-nsgaii}, multi-objective evolutionary algorithm based on decomposition (MOEA/D)~\cite{zhang2007moea}, and $\mathcal{S}$ metric selection evolutionary multi-objective optimization algorithm (SMS-EMOA)~\cite{beume2007sms}. 
\begin{table*}[t]
  \centering
  \begin{tabular}{lll}
  \toprule
    & \textbf{Stochastic Population Update} & \textbf{Stochastic Population Update + Archive} \\ \midrule
    \multirow{5}{*}[-1ex]{SMS-EMOA } & $O(\mu n^k \cdot \min\{1,\sqrt{k}\mu/2^{k/2}\})$~\cite{bian23stochastic} & \\
     &$O(\mu n^k \cdot \min\{1,\mu/2^{k}\})$~\cite{zheng2024sms} & \\
    & [$\mu\ge 2(n-2k+4)$; $p_s=1/2$] & $O( \mu n^k\cdot \min\{1,(e\ln C/k)^{k-1}\}) $ [Theorem~\ref{thm:sms2}]\\
    \cmidrule(lr){2-2} 
       & $O( \mu n^k\cdot \min\{1,(e\ln C/k)^{k-1}\})$ [Theorem~\ref{thm:sms1}] & [$\mu\ge 3$; $C =e\mu/(p_s(1-p_c))$] \\ 
       & [$\mu\ge (n-2k+4)/(1-p_s)$; $C = e\mu/(p_s(1-p_c))$] & \\ \midrule
    \multirow{4}{*}[-0.5ex]{NSGA-II} & $O(\mu \sqrt{k}(n/2)^k)$~\cite{bian23stochastic}& \\
    & [$\mu\ge 8(n-2k+3)$; $p_s=1/4$; $k > 8e^2$] & $O( \mu n^k\cdot \min\{1,(e\ln C/k)^{k-1}\})$ [Theorem~\ref{thm:NSGA2-2}]\\
    \cmidrule(lr){2-2} 
       & $O( \mu n^k\cdot \min\{1,(e\ln C/k)^{k-1}\})$~[Theorem~\ref{thm:NSGA2-1}] & [$\mu \ge 5$; $C = e/(p_s(1-p_c))$] \\ 
       & [$\mu \ge 4(n-2k+3)/(1-2p_s)$; $C = e/(p_s(1-p_c))$] & \\ 
    \bottomrule
  \end{tabular}
  % \vspace{-0.1cm}
  \caption{
  The expected number of fitness evaluations of SMS-EMOA and NSGA-II for solving \ojzj\ when using SPU alone, or with an archive, where $n$ denotes the problem size, $k$ ($2\le k<n/2$) denotes the parameter of \ojzj, $\mu$ denotes the population size, $p_c$ denotes the probability of using crossover, and $p_s$ denotes the proportion of the current population and offspring solution(s) that SPU selects to preserve directly. 
  The required ranges of $p_c$ are: $1-p_c = \Omega(1)$ for Theorems~\ref{thm:sms1} and~\ref{thm:NSGA2-1}; $p_c = \Theta(1)$ for Theorems~\ref{thm:sms2} and~\ref{thm:NSGA2-2}. The required ranges of $p_s$ are: $p_s \in [1/(\mu+1), 1-o(1/\mu))$ for Theorem~\ref{thm:sms1}; $p_s \in [1/(\mu+1), (\mu-2)/(\mu+1)]$ for Theorem~\ref{thm:sms2}; $p_s \in [1/(2\mu), 1/2-o(1/\mu))$ for Theorem~\ref{thm:NSGA2-1}; $p_s \in [1/(2\mu), (\mu-4)/(2\mu)]$ for Theorem~\ref{thm:NSGA2-2}.
  }
  \label{tab:summary}
\vspace{-0.1cm}
\end{table*}

In MOEAs, a key component is population update (aka environmental selection or population maintenance). It aims to select a set of promising solutions from the current population and newly generated solutions, which serves as a reservoir to generate high-quality solutions in subsequent generations. In most existing MOEAs, the population update is performed in a greedy and deterministic manner, with the best solutions (non-dominated solutions) always being preserved. This is based on the assumption that higher-quality solutions are more likely to generate better offspring. However, this is not always the case, particularly in rugged problem landscapes with many local optima, where solutions can easily get trapped in MOEAs. Repetitively exploring such local-optimal solutions may not help. Indeed, recent studies show that mainstream MOEAs (e.g., NSGA-II and SMS-EMOA) can easily stagnate, and even more, their population may end up in a very different area at a time~\cite{li2023moeas}.

% In MOEAs, population update is a key component (also referred to as environmental selection or population maintenance), as it focuses on maintaining a diverse set of high-quality solutions, which serves as a reservoir for generating better solutions in subsequent generations. A prominent feature of the population update in MOEAs is its deterministic manner, which selects the best solutions from the current population and newly-generated solutions (irrespective of the selection criteria used such as Pareto dominance, crowdedness and indicators) to form the next generation. %may lead to the loss of some poorly performing but exploration-beneficial solutions. 
Very recently, Bian \textit{et al.}~\shortcite{bian23stochastic} analytically showed that introducing randomness in the population update process of MOEAs (called stochastic population update, SPU) can help the search.
% However, Bian \textit{et al.}~\shortcite{bian23stochastic} challenged this practice, and proved that stochastic population update (SPU) can be beneficial for the search of MOEAs. 
Specifically, the study proved that for SMS-EMOA solving the common benchmark problem OneJumpZeroJump, when $k=n/2-\Omega(n)$, using SPU can bring an acceleration of $\Omega(2^{k/2}/(\sqrt{k}\mu^2))=\Omega(2^{k/2}/(\sqrt{k}(n-2k+4)^2))$ 
% (which can be exponential, e.g., for $k=n/4$)
on the expected running time, where $n$ denotes the problem size, $k$ ($2\le k<n/2$) denotes the parameter of \ojzj, and $\mu$ denotes the population size. Subsequently, Zheng and Doerr~\shortcite{zheng2024sms} extended SMS-EMOA to solve a many-objective problem, $m$\ojzj, showing that the same SPU can bring an acceleration of $\Theta(2^{k}/\mu)=\Theta(2^{k}/(2n/m-2k+3)^{m/2})$ as well, where $m$ is the number of objectives. These works echoed the empirical studies that show the benefit of considering non-elitism in MOEAs~\cite{tanabe2019non,li2023nonelitist}.

SPU in~\cite{bian23stochastic,zheng2024sms} introduces randomness by randomly selecting a proportion $p_s$ of the combined set of the current population and offspring solution(s) to be directly preserved into the next generation. This essentially gives a chance for inferior solutions to survive, which enables the evolutionary search to go along inferior regions which may be close to Pareto optimal regions. However, a cost of this method is that there is less space for the best solutions in the population. Some very best solutions to the problem (e.g., globally non-dominated solutions), even found by an MOEA, may be discarded during the population update process. This necessitates a large population used.
Unfortunately, when the population size is large, the benefit of using SPU may vanish. This is because the benefit of SPU comes from the operation on inferior solutions while the large population size will lead to a small probability of selecting these solutions. For example, for SMS-EMOA solving \ojzj, the acceleration of $\Omega(2^{k/2}/(\sqrt{k}\mu^2))$ brought by SPU~\cite{bian23stochastic} will vanish when the population size $\mu$ is exponential w.r.t. $k$, e.g., $k = \log n$; for SMS-EMOA solving $m$OneJumpZeroJump, the acceleration of $ \Theta(2^k/(2n/m - 2k+3)^{m/2})$~\cite{zheng2024sms} will vanish when the number $m$ of objectives is large, e.g., $m\ge k$, because the population size $\mu=(2n/m - 2k+3)^{m/2}$ increases rapidly with $m$.
This dilemma impacts both the effectiveness and practicality of using SPU in MOEAs.

Intuitively, a possible solution to this issue is to use an archive to store the best solutions ever found. In fact, in the area of MOEAs, this approach has become a popular practice~\cite{li2023multi}. Since the formalization of the archiving problem in the early 2000s~\cite{knowles2003properties}, there has been increasing interest and feasibility to use (even unbounded) archives in MOEAs, as seen in e.g.~\cite{fieldsend2003using,krause2016unbounded,brockhoff2019benchmarking,ishibuchi2020new}.
In this paper, we analytically show that incorporating an unbounded archive into SPU can reduce the population size and significantly enhance acceleration.
Specifically, we compare the expected running time of two well-established MOEAs, SMS-EMOA and NSGA-II, with SPU for solving OneJumpZeroJump, when an archive is used or not. The results are shown in Table~\ref{tab:summary}. Our contributions can be summarized as follows.
\begin{itemize}
  \item We theoretically show that incorporating an archive mechanism with SPU can reduce the upper bound on the expected running time (even exponentially). 
% For example, comparing the results of Theorems~\ref{thm:sms1} and~\ref{thm:sms2} in Table~\ref{tab:summary}, the expected running time of SMS-EMOA with SPU for solving \ojzj, no matter whether an archive is used or not, is $O(\mu n^k \cdot \min\{1,(e\ln C/k)^{k-1}\})$, where $C = e\mu/(p_s(1-p_c))$. The key difference is that using an archive allows for a constant population size, which leads to a significantly smaller $C$, thereby reducing the upper bound.
For example, comparing Theorems~\ref{thm:sms1} and~\ref{thm:sms2} in Table~\ref{tab:summary}, the expected running time of SMS-EMOA with SPU for solving \ojzj, no matter whether an archive is used or not, is $O(\mu n^k \cdot \min\{1,(e\ln C/k)^{k-1}\})$, where $C = e\mu/(p_s(1-p_c))$. The key difference is that using an archive allows a constant population size, resulting in a significantly smaller $C$ and thus reducing the upper bound significantly.
  % 在 $k$ 一定时，e.g., $k=n/8$, 如果种群大小从 $\mu=2n$ 降低至 $\mu=5$，而其他参数不变时，上界减少 $O(\ln2n/5)^{n/8}$. 同时一个更小的 $C$ 也放松了带来加速的条件，i.e., $e\ln C < k$.
  % For the case of using SPU alone, when $k = e\ln (8en)$, $p_s = 1/2$, $p_c = 1/2$, and $\mu = 2n$, the expected running time is $O(n^{k+1})$, which is the same as $O(n^{k+1})$ without SPU~\cite{bian23stochastic}. However, introducing an archive and employing a population size $\mu=5$ reduces the expected running time to $O(n^k \cdot (e\ln(20e)/k)^{k-1})$, achieving superpolynomial acceleration and revitalizing the effectiveness of SPU. 
  % Even when using SPU alone is effective (e.g., the expected running time is $O(n^{k+1} (8e\ln (8en)/n)^{n/8-1})$ when $k=n/8$, $p_s=1/2$, $p_c = 1/2$, and $\mu=2n$), using an archive can further bring exponential acceleration (e.g., the expected running time reduces to $O( n^k (8e\ln (20e)/n)^{n/8-1})$ by setting $\mu=5$). 
  Note that Bian \textit{et al.}~\shortcite{bian2024archive} recently proved the effectiveness of using an archive (bringing polynomial acceleration) for MOEAs, while our analysis for MOEAs with SPU reveals that even exponential acceleration can be obtained.
  % \item Comparing the results of Theorems~\ref{thm:sms1} and~\ref{thm:NSGA2-1} in Table~\ref{tab:summary}, we can find that the expected running time for both SMS-EMOA and NSGA-II to solve \ojzj\ is $O(\mu n^k \cdot \min\{1,(e\ln C/k)^{k-1}\})$. The key difference lies in that $C = e\mu/(p_s(1-p_c))$ for SMS-EMOA, whereas $C = e/(p_s(1-p_c))$ for NSGA-II. This implies that NSGA-II may be better since a smaller $C$ leads to a shorter running time.
  \item Comparing the results of Theorems~\ref{thm:sms1} and~\ref{thm:NSGA2-1} in Table~\ref{tab:summary}, we can find that the upper bound of NSGA-II is smaller than SMS-EMOA when using SPU.  
  Our analysis reveals that the benefit of NSGA-II is due to its $(\mu+\mu)$ update mode, which selects each solution in the current population for reproduction and thus makes exploring promising dominated solutions easier. 
  % Similar results can be observed by comparing Theorems~\ref{thm:sms2} and~\ref{thm:NSGA2-2}. 
  % Thus, we suggest that the $(\mu+\mu)$ update mode may be more suitable for SPU than the $(\mu+1)$ update mode.
  \item In addition, our derived running time bounds for MOEAs with SPU in the second column of Table~\ref{tab:summary} are significantly better than the previously known ones~\cite{bian23stochastic,zheng2024sms}. This improvement stems from the analysis method of considering both the number and size of jumps across gap of dominated solutions. 
  The method happens to share similarities with that of proving Lemma~12 in~\cite{doerr2024hyper}.
  % In a parallel theoretical work, Doerr and XXX [XXX] proved that with a non-elitist mechanism (preserving any solution with probability $p$ in each iteration) and bit-wise mutation, the expected running time on the Jump function is $ O(n^k \cdot \min(1, O((e \ln n)/k)^k)) $. 
% Our analysis appears to share similarities with that in Lemma~12 in their paper. The main differences are: (1) our use of the Multinomial Theorem for derivation and differentiation to find the optimal value; (2) our analysis of the SPU+archive mechanism in multi-objective scenarios, affecting probabilistic analysis; and (3) our focus on population size influencing the final bound. 
  % For NSGA-II solving \ojzj, when using SPU with $p_s = 1/4$ and $p_c = 0$, the expected running time derived in \cite{bian23stochastic} is $O(\mu k(n/2)^k)$ for $k>8e^2$, whereas our result in Theorem~\ref{thm:NSGA2-1} is $O(\mu n^k \cdot (e\ln(4e)/k)^{k-1})$ with the same settings. This implies an improvement ratio of $\Theta(k(k/(2e\ln (4e)))^{k-1})$, which can be exponential when $k$ is large, e.g., $k=n/8$. Similar improvements can be observed for SMS-EMOA with SPU by comparing the result in Theorem~\ref{thm:sms1} with those in~\cite{bian23stochastic,zheng2024sms}. 
  Moreover, our running time bounds are more general, as we consider variable survival probability $p_s$ and crossover probability $p_c$.
\end{itemize}

We also validate our theoretical findings through an empirical study on the artificial OneJumpZeroJump problem and the multi-objective travelling salesperson problem (MOTSP)~\cite{ribeiro2002study}. The results show that combining SPU with an archive leads to the best performance for both SMS-EMOA and NSGA-II. Furthermore, the results on the MOTSP show that with the SPU method, NSGA-II always performs better than SMS-EMOA. These results confirm our theoretical findings.

Finally, we give a brief overview about the running time analysis of MOEAs. Over the last decade, there has been an increasing interest for the evolutionary theory community to study MOEAs. Early theoretical works~\cite{laumanns-nc04-knapsack,LaumannsTEC04,Neumann07,Giel10,Neumann10,doerr2013lower,Qian13,qian-ppsn16-hyper,bian2018tools} mainly focus on analyzing the expected running time of a simple MOEA like GSEMO/SEMO. Recently, researchers have begun to examine practical MOEAs. Huang \textit{et al.}~\shortcite{huang2021runtime} investigated MOEA/D, assessing the effectiveness of different decomposition methods. Zheng \textit{et al.}~\shortcite{zheng2021first} conducted the first theoretical analysis of NSGA-II. Bian \textit{et al.}~\shortcite{bian23stochastic} analyzed the running time of SMS-EMOA and showed that SPU can bring acceleration.
Moreover, Wietheger and Doerr~\shortcite{wietheger23nsgaiii} demonstrated that NSGA-III~\cite{deb2014nsgaiii} exhibits superior performance over NSGA-II in solving the tri-objective problem $3$OneMinMax. Ren \textit{et al.}~\shortcite{Ren2024spea2} analyzed the running time of SPEA2 on three $m$-objective problems.
Some other works on well-established MOEAs include~\cite{bian2022better,zheng2022current,zheng2023manyobj,cerf2023first,dang2023analysing,dang2023crossover,doerr2023lower,doerr2023crossover,doerr24simple,doerr24unbounded,lu2024imoea,Opris2024nsgaiii,opris2024crossover,ren2024multimodel}.

\section{Multi-objective Optimization}

%In this section, we first introduce multi-objective optimization. Then, we describe the studied algorithms, SMS-EMOA and NSGA-II, followed by an explanation of SPU and the archive mechanism. Finally, we present the benchmark problem \ojzj\ studied in this paper.

%\subsection{Multi-objective Optimization}

Multi-objective optimization aims to optimize two or more objective functions simultaneously, as shown in Definition~\ref{def_MO}. In this paper, we focus on maximization, while minimization can be defined similarly. %, specifically pseudo-Boolean functions where the solution space $\mathcal{X}=\{0,1\}^n$. 
The objectives are typically conflicting, meaning that there is no canonical complete order in the solution space $\mathcal{X}$. To compare solutions, we use the \emph{domination} relationship in Definition~\ref{def_Domination}.
A solution is \emph{Pareto optimal} if no other solution in $\mathcal{X}$ dominates it. The set of objective vectors corresponding to all Pareto optimal solutions constitutes the \emph{Pareto front}. The goal of multi-objective optimization is to find the Pareto front or its good approximation.
\begin{definition}[Multi-objective Optimization]\label{def_MO}
	Given a feasible solution space $\mathcal{X}$ and objective functions $f_1,f_2,\ldots, f_m$, multi-objective optimization can be formulated as
	\[
	\max_{\bmx\in
		\mathcal{X}}\bmf(\bmx)=\max_{\bmx \in
		\mathcal{X}} \big(f_1(\bmx),f_2(\bmx),...,f_m(\bmx)\big).
	\]
\end{definition}
\begin{definition}\label{def_Domination}
	Let $\bm f = (f_1,f_2,\ldots, f_m):\mathcal{X} \rightarrow \mathbb{R}^m$ be the objective vector. For two solutions $\bmx$ and $\bmy\in \mathcal{X}$:
	\begin{itemize}
		\item $\bmx$ \emph{weakly dominates} $\bmy$ (denoted as $\bmx \succeq \bmy$) if for any $1 \leq i \leq m, f_i(\bmx) \geq f_i(\bmy)$;
		\item $\bmx$ \emph{dominates} $\bmy$ (denoted as $\bmx\succ \bmy$) if $\bm{x} \succeq \bmy$ and $f_i(\bmx) > f_i(\bmy)$ for some $i$;
		\item $\bmx$ and $\bmy$ are \emph{incomparable} if neither $\bmx\succeq \bmy$ nor $\bmy\succeq \bmx$.
	\end{itemize}
\end{definition}
\noindent Note that the notions of ``weakly dominate" and ``dominate" are also called ``dominate" and ``strictly dominate" in some works~\cite{cerf2023first,wietheger23nsgaiii}.

%\subsection{Stochastic Population Update and Archive}

%\subsection{OneJumpZeroJump Problem}
% \begin{figure}\centering
%     \begin{minipage}[c]{0.48\linewidth}\centering
% 		\includegraphics[width=1\linewidth]{ojzj}
% 	\end{minipage}
%     \hspace{0.1em}
%     \begin{minipage}[c]{0.48\linewidth}\centering
% 		\includegraphics[width=1\linewidth]{ojzj-pareto.pdf}
% 	\end{minipage}
% 	\caption{Illustration of the \ojzj \ problem when $n=20$ and $k=5$. The left subfigure: the function values vs. the number of 1-bits of a solution; the right subfigure: 
% the second function value vs. the first function value, where the set of green points are the Pareto front.}\label{fig:ojzj}
% \end{figure}

Next, we introduce the benchmark problem \ojzj\ studied in this paper. The \ojzj\ problem as presented in Definition~\ref{def:ojzj} is constructed based on the Jump problem~\cite{doerr2020theory}, and has been widely used in MOEAs' theoretical analyses~\cite{doerr2021ojzj,doerr2023nsgaojzj,lu2024imoea,ren2024multimodel}. Its first objective is the same as the Jump problem, which aims to maximize the number of 1-bits of a solution, except for a valley around $1^n$ (the solution with all 1-bits) where the number of 1-bits should be minimized. The second objective is isomorphic to the first, with the roles of 1-bits and 0-bits reversed. The Pareto front of the \ojzj \ problem is
$ \{(a, n+2k-a)\mid a\in[2k..n]\cup\{k, n+k\}\}$, whose size is $n-2k+3$, and the Pareto optimal solution corresponding to $(a, n+2k-a)$, $a\in[2k..n]\cup\{k, n+k\}$, is any solution with $(a-k)$ 1-bits. We use $F_{I}^*=\{(a, n+2k-a)\mid a\in[2k..n]\}$ to denote the inner part of the Pareto front. 

\begin{definition}[\ojzj~\cite{doerr2021ojzj}]\label{def:ojzj}
	The \ojzj \ problem is to find $n$ bits binary strings which maximize
    % \vspace{-0.3em}
	\[ f_1(\bmx) = \begin{cases}
		k+|\bmx|_1, & \text{if }|\bmx|_1 \leq n-k\text{ or } \bmx=1^n,\\
		n-|\bmx|_1, & \text{else},
	\end{cases}\]
    % \vspace{-0.4em}
	\[f_2(\bmx) = \begin{cases}
		k+|\bmx|_0, & \text{if }|\bmx|_0 \leq n-k\text{ or } \bmx=0^n,\\
		n-|\bmx|_0, & \text{else},
	\end{cases}\]
	where $k\in \mathbb{Z} \wedge 2\le k<n/2$, and $|\bmx|_1$ and $|\bmx|_0$ denote the number of 1-bits and 0-bits in $\bmx \in \{0,1\}^n$, respectively.
\end{definition}

\section{Running Time Analysis of SMS-EMOA}

In this section, we prove that using an archive can bring (even exponential) speedup for the well-established MOEA, SMS-EMOA~\cite{beume2007sms}, with stochastic population update (SPU) solving the \ojzj\ problem. In Section~\ref{subsec-sms-intro}, we first introduce SMS-EMOA with SPU, and its expected running time for solving \ojzj, which is much tighter than the previous results~\cite{bian23stochastic,zheng2024sms}. Then, in Section~\ref{subsec-sms-analysis}, we prove that using an archive can significantly reduce the upper bound on the expected running time. Due to space limitation, all mathematical proofs could only be sketched or had to be omitted completely.

\subsection{SMS-EMOA with SPU}\label{subsec-sms-intro}

%Now we introduce two well-established MOEAs, SMS-EMOA~\cite{beume2007sms} and NSGA-II~\cite{deb-tec02-nsgaii}.
SMS-EMOA presented in Algorithm~\ref{alg:sms} is a popular MOEA, which employs non-dominated sorting and hypervolume indicator to update the population. It starts from an initial population of $\mu$ solutions (line~1). In each generation, it randomly selects a solution $\bm{x}$ from the current population (line~3) for reproduction. With probability $p_c$, it selects another solution $\bmy$ and applies one-point crossover on $\bmx$ and $\bmy$ to generate an offspring solution $\bmx'$ (lines~4--7); otherwise, $\bmx'$ is set as the copy of $\bmx$ (line~9). 
Note that one-point crossover selects a random point $i \in \{1, 2, \ldots, n\}$ and exchanges the first $i$ bits of two parent solutions, which actually produces two new solutions, but the algorithm only picks the one that consists of the first part of the first parent solution and the second part of the second parent solution.
Afterwards, bit-wise mutation flips each bit of $\bmx'$ with probability $1/n$ to produce an offspring $\bmx''$ (line~11).
Then, the worst solution in $P \cup \{\bm{x}''\}$, the union of the current population and offspring, is removed (line~12) using the \textsc{Population Update of SMS-EMOA} subroutine described in Algorithm~\ref{alg:smspopdate}. The subroutine first partitions the solution set $Q$ (where $Q = P \cup \{\bm{x}''\}$) into non-dominated sets $R_1, R_2, \ldots, R_v$ , where $R_1$ contains all non-dominated solutions in $Q$, and $R_i$ ($i \ge 2$) contains all non-dominated solutions in $Q \setminus \cup_{j=1}^{i-1} R_j$. A solution $\bm{z} \in R_v$ is then removed by minimizing $\Delta_{\bmr}(\bm{x}, R_v) := HV_{\bmr}(R_v) - HV_{\bmr}(R_v \setminus \{\bm{x}\})$, where $HV_{\bmr}(X)$ denotes the hypervolume of the solution set $X$ with respect to a reference point $\bmr \in \mathbb{R}^m$ ($\forall i, r_i\le \min_{\bmx\in \mathcal{X}}f_i(\bmx)$), i.e., the volume of the objective space between the reference point and the objective vectors of the solution set. A larger hypervolume indicates better approximation of the Pareto front in terms of convergence and diversity. For bi-objective problems, as defined in the original SMS-EMOA~\cite{beume2007sms}, the algorithm omits the reference point and directly preserves the two boundary points, allowing the hypervolume to be calculated accordingly.
% Then, the worst solution in $P \cup \bm{x}''$, the union of the current population and offspring, is removed (line~12), by using the \textsc{Population Update of SMS-EMOA}\ subroutine as presented in Algorithm~\ref{alg:smspopdate}. The subroutine first partitions a set $Q$ of solutions (where $Q=P\cup \{\bmx''\}$ in Algorithm~\ref{alg:sms}) into non-dominated sets $R_1,R_2,\ldots,R_v$ (line~1), where $R_1$ contains all the non-dominated solutions in $Q$, and $R_i$ ($i\ge 2$) contains all the non-dominated solutions in $Q \setminus \cup_{j=1}^{i-1} R_j$. 
% Then, one solution $\bmz\in R_v$ that minimizes
% $
% \Delta_{\bmr}(\bmx, R_v):=HV_{\bmr}(R_v)-HV_{\bmr}(R_v\setminus \{\bmx\})
% $
% is removed (lines~2--3), where 
% $HV_{\bmr}(X)
% 	=\Lambda
% 	\big(\cup_{\bmx\in X} 
% 	\{\bmf'\in \mathbb{R}^m \mid 
% 	\forall 1\le i\le m: 
% 	r_i\le f'_i\le f_i(\bmx)\}\big) $
% denotes the hypervolume of a solution set $X$ with respect to a reference point $\bmr\in \mathbb{R}^m$ (satisfying $\forall 1\le i\le m, r_i\le \min_{\bmx\in \mathcal{X}}f_i(\bmx)$), i.e., the volume of the objective space between the reference point and the objective vectors of the solution set, where $\Lambda$ denotes the Lebesgue measure. A larger hypervolume implies a better approximation of the Pareto front in terms of convergence and diversity. 
% This implies that the solution with the least value of $\Delta$ in the last non-dominated set $R_v$ is regarded as the worst solution, which is thus removed. 

In~\cite{bian23stochastic}, the SPU method is introduced and shown to be beneficial for the search of MOEAs. 
During population updates, SPU randomly selects a proportion $p_s$ of the current population and the offspring solution(s) to directly survive into the next generation and the removed part is selected from the rest solutions. 
% During population updates, SPU selects $\lfloor |Q|\cdot (1-p_s) \rfloor$ solutions from $Q$ uniformly and randomly, and then one solution is removed according to non-dominated sorting and hypervolume.
This implies that each solution, including the worst solution in the population, has at least a probability $p_s$ of surviving to the next generation. Specifically, \textsc{SPU of SMS-EMOA} as presented in Algorithm~\ref{alg:sms-SPU} is used to replace the original \textsc{Population Update} procedure in line~12 of Algorithm~\ref{alg:sms}. Note that $p_s$ is set to $1/2$ in~\cite{bian23stochastic}, while we consider a general $p_s$ here. 
\begin{algorithm}[t!]
	\caption{SMS-EMOA}
	\label{alg:sms}
	\textbf{Input}: objective function $f_1,f_2\cdots,f_m$, population size $\mu$, probability $p_c$ of using crossover \\
	%	\textbf{Parameter}: Optional list of parameters\\
	\textbf{Output}: $\mu$ solutions from $\{0,1\}^n$
	\begin{algorithmic}[1] %[1] enables line numbers
		\STATE $P\leftarrow \mu$ solutions uniformly and randomly selected from $\{0, 1\}^{\!n}$ with replacement;
		\WHILE{criterion is not met}
		\STATE select a solution $\bmx$ from $P$ uniformly at random;
		\STATE sample $u$ from the uniform distribution over $[0, 1]$;
		\IF{$u<p_c$}
		\STATE select a solution $\bmy$ from $P$ uniformly at random;
		\STATE apply one-point crossover on $\bmx$ and $\bmy$ to generate  $\bmx'$
		\ELSE 
		\STATE set $\bmx'$ as the copy of $\bmx$
		\ENDIF
		\STATE apply bit-wise mutation on $\bmx'$ to generate $\bmx''$;
		\STATE $P\leftarrow \textsc{Population Update}(P\cup \{\bmx''\})$
		\ENDWHILE
		\RETURN $P$
	\end{algorithmic}
\end{algorithm}
\begin{algorithm}[t]
	\caption{\textsc{Population Update of SMS-EMOA} ($Q$)~}
	\label{alg:smspopdate}
    % \vspace{0.5em}
	\textbf{Input}: a set $Q$ of solutions, and a reference point $\bmr\in \mathbb{R}^m$\\
	\textbf{Output}: $|Q|-1$ solutions from $Q$
	\begin{algorithmic}[1] %[1] enables line numbers
		\STATE partition $Q$ into non-dominated sets $R_1,R_2,\ldots,R_v$;
		\STATE let $\bmz=\arg\min_{\bmx\in R_v}\Delta_{\bmr}(\bmx,R_v)$;
    \RETURN $Q\setminus \{\bmz\}$
    % \vspace{-0.25em}
	\end{algorithmic}
\end{algorithm}
\begin{algorithm}[t!]
	\caption{ \textsc{SPU of SMS-EMOA} ($Q$)~}
	\label{alg:sms-SPU}
	\textbf{Input}: a set $Q$ of solutions, and a reference point $\bmr\in \mathbb{R}^m$\\
	\textbf{Output}:  $|Q|-1$ solutions from $Q$
	\begin{algorithmic}[1] %[1] enables line numbers
		\STATE $Q'\leftarrow \lfloor |Q|\cdot (1-p_s)\rfloor$ solutions uniformly and randomly selected from $Q$ without replacement;
		\STATE partition $Q'$ into non-dominated sets $R_1,R_2,\ldots,R_v$;
		\STATE let $\bmz=\arg\min_{\bmx\in R_v}\Delta_{\bmr}(\bmx,R_v)$;
		\RETURN $Q\setminus \{\bmz\}$
	\end{algorithmic}
\end{algorithm}

The expected running time of SMS-EMOA with SPU for
solving OneJumpZeroJump has been proven to be $O(\mu n^k \cdot \min\{1,\sqrt{k}\mu/2^{k/2}\})$~\cite{bian23stochastic}, which is better than that, i.e., $O(\mu n^k)$, of the original SMS-EMOA. Intuitively, by 
% introducing randomness into the population update, the evolutionary search has a chance to go along inferior regions which are close to Pareto front, thus facilitating the search.
by introducing randomness into the population update, the evolutionary search has a chance to go along inferior regions which are close to Pareto optimal regions, thereby making the search easier.
% , compared to bigger jumps needed to reach the optimal regions in the original deterministic, greedy procedure. 
Here, we re-prove a tighter upper bound on the expected running time of SMS-EMOA with SPU for solving OneJumpZeroJump, as shown in Theorem~\ref{thm:sms1}. It is also more general, as it considers a survival probability $p_s \in [1/(\mu+1), 1 - o(1/\mu))$, 
% and a crossover probability $1-p_c = \Omega(1)$, 
rather than just $p_s = 1/2$ 
% and $p_c = 0$ 
as in~\cite{bian23stochastic}. Note that the running time of EAs is often measured by the number of fitness evaluations.
% the most time-consuming step in the evolutionary process. 
% Since each objective value of OneJumpZeroJump is not smaller than zero, we set reference point $\bm{r} = (-1,-1)$.
% As SMS-EMOA generates only one offspring solution in each generation, its running time is just equal to the number of generations. 

%In Theorem~\ref{thm:sms1}, we show that using SPU with a survival probability $p_s \in [1/(\mu+1), 1-o(1/\mu))$ and a population size $\mu \ge (n-2k+4)/(1-p_s)$, the expected number of generations of SMS-EMOA for solving OneJumpZeroJump with $k>e\ln C$ is $O(\mu n^k\cdot (e\ln C/k)^{k-1}/(1-p_s))$, where $C = e\mu(1-p_s)/(p_s(1-p_c))$.
%And for another $O((40e\mu(\mu+1))^k(\frac{1}{\mu-1})^{k-1} \frac{10e}{9} \frac{n^k}{(k-1)!}) = O(\mu k (\Theta(\mu n/k)^k) )$~\cite{doerr2023crossover} to find all-ones string.
\begin{theorem}\label{thm:sms1}
  For SMS-EMOA solving OneJumpZeroJump with $n-2k=\Omega(n)$, if using SPU with survival probability $p_s\in [1/(\mu+1),1-o(1/\mu))$, the crossover probability $1-p_c = \Omega(1)$, and the population size $\mu\ge (n-2k+4)/(1-p_s)$, then the expected running time for finding the whole Pareto front is $O(\mu n^k\cdot \min\{1,(e\ln C/k)^{k-1}\})$, where $C = e\mu/(p_s(1-p_c))$.
\end{theorem}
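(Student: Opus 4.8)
The plan is to analyze the optimization of \ojzj\ by SMS-EMOA with SPU in three phases, following the standard decomposition for this benchmark but with a sharper treatment of the "jump" step. First I would bound the time to reach a population that covers the entire inner part $F_I^*$ of the Pareto front, i.e., for every value $a \in [2k..n]$ there is a solution in $P$ with $a-k$ one-bits. This phase resembles the classical analysis: starting from any population, one shows that once some inner Pareto point is hit, the population "spreads" along $F_I^*$ because neighbouring inner points are reached by single-bit flips, which occur with probability $\Theta(1/n)$ per selected individual, and SPU only slows this down by a constant factor since each solution survives with probability at least $p_s$ and is selected for reproduction with probability $1/\mu$. The hypervolume-based update (with the two boundary points always kept) ensures that once all inner points are represented they are never jointly lost; combined with $\mu \ge (n-2k+4)/(1-p_s)$, which guarantees enough "slots" survive the random culling, the spread completes in $O(\mu n \log n)$ or $O(\mu n^2)$ evaluations — negligible compared to the $n^k$ term. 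I would also argue the initial population reaches $F_I^*$ (or one of the extreme points $1^n$, $0^n$) within $O(\mu n \log n)$ evaluations with the standard "increase $|\bmx|_1$ toward $n-k$" argument.

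The heart of the proof is the second phase: from a population covering $F_I^*$, reach the two extremal Pareto points $1^n$ (objective $(k,n+k)$... actually $f(1^n)=(k, n+k)$) and $0^n$. The only way to get $1^n$ is to flip the $k$ remaining zero-bits of a solution with $n-k$ ones in a single mutation, which has probability $\Theta(n^{-k})$ — giving the $\mu n^k$ factor — unless crossover or a shorter sequence of jumps helps. This is where the $\min\{1,(e\ln C/k)^{k-1}\}$ improvement comes from, and it is the main obstacle. The idea, mirroring the method of Lemma~12 in~\cite{doerr2024hyper}, is that SPU lets the algorithm maintain inferior solutions in the "gap" region (solutions with $|\bmx|_1 \in (n-k, n)$, which are dominated) long enough to perform the ascent $n-k \to n$ in several smaller jumps: a jump of size $j$ (flipping $j$ zero-bits) costs probability $\approx n^{-j}$ but there are $\Theta(j)$ choices if a gap solution is available, and across a path of total length $k$ the product of such costs together with the survival probabilities of the intermediate gap solutions yields the $(e\ln C/k)^{k-1}$ speedup. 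Concretely, I would fix a near-optimal partition of $k$ into jump sizes, lower-bound the probability that the population simultaneously holds a suitable gap solution when each jump is attempted (here SPU's $p_s$ and $1-p_c$ enter, producing $C = e\mu/(p_s(1-p_c))$), and apply a multiplicative drift or a direct expectation-of-geometric-sum argument over the $O(1)$ jumps in the path. The subtlety is that gap solutions are dominated and thus at risk of removal by the hypervolume culling; one must show SPU's random selection of the $\lfloor|Q|(1-p_s)\rfloor$ "protected" set preserves a useful gap solution with probability $\Omega(p_s)$ each generation, so that a gap solution created by one jump persists for the $\Theta(n^{j'})$ generations needed for the next jump of size $j'$ — this balances the survival cost against the jump cost and is exactly what forces the logarithmic factor $\ln C$ inside the bound rather than a constant.

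For the third phase I would note that reaching $0^n$ is symmetric to reaching $1^n$ by the isomorphism between $f_1$ and $f_2$, and that the two extremal searches can be run in parallel (the population is large enough to host both the $F_I^*$ cover and the gap solutions on both ends), so the total expected time is dominated by a single $O(\mu n^k \cdot \min\{1,(e\ln C/k)^{k-1}\})$ term rather than their sum. Throughout, the generality in $p_s \in [1/(\mu+1), 1-o(1/\mu))$ and $1-p_c = \Omega(1)$ is handled by keeping these as symbolic factors: $p_s$ bounded away from $1$ by $o(1/\mu)$ ensures at least one slot is culled (so progress-bearing offspring can enter and the relevant events are non-degenerate... actually ensures the protected set is a strict subset), while $p_s \ge 1/(\mu+1)$ and $1-p_c=\Omega(1)$ keep the per-generation success probabilities of the form $\Omega(p_s(1-p_c)/(\mu n^j))$. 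I expect the bookkeeping for the second phase — precisely defining the "good" gap configurations, showing they are reached and maintained, and summing the geometric waiting times over an optimal jump decomposition while correctly producing the constant $e$ and the $\ln C$ — to be the technically demanding part; the first and third phases are routine adaptations of known \ojzj\ arguments with constant-factor losses from SPU.
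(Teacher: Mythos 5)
Your phase decomposition, the role of the population-size condition $\mu\ge (n-2k+4)/(1-p_s)$ in never losing Pareto-front objective vectors, and the idea of crossing the gap in several smaller mutation jumps all match the paper's strategy. However, the core of your phase-2 argument has a genuine gap: you propose that a gap (dominated) solution created by one jump should \emph{persist for the $\Theta(n^{j'})$ generations needed} until the next jump of size $j'$ succeeds, with SPU preserving it with probability $\Omega(p_s)$ per generation. That chaining mechanism fails. A dominated gap solution has no protection from the hypervolume-based culling beyond the per-generation survival guarantee $p_s$ (it may well be the unique removal candidate whenever it lands in the culled pool), so its probability of surviving $T$ generations can be as small as roughly $p_s^{T}$; for $T=\Theta(n^{j'})$ this is super-exponentially small and destroys the claimed bound. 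There is no way to ``balance the survival cost against the jump cost'' over such long waiting times; the expected lifetime of a gap solution is only $O(1/(1-p_s))$ generations.

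The paper's proof avoids this by requiring the entire trail of $M+1$ jumps to happen in \emph{consecutive} generations: each intermediate jump needs the current stepping stone to be selected (probability $1/\mu$), the next stepping stone to be produced by mutation without crossover (probability roughly $\binom{k-\sum_{j<i}k_j}{k_i}(1-p_c)/(e\,n^{k_i})$), and then preserved by SPU (probability $\ge p_s$); any failure restarts the whole trail from the solution with $n-k$ ones, which is always retained. Multiplying these and summing over all jump-size partitions $k_0+\cdots+k_M=k$ via the multinomial theorem gives a success probability per trail of order $(1-p_c)(M+1)^k/(e\mu n^k C^M)$ with $C=e\mu/(p_s(1-p_c))$, and optimizing $M\approx (k-1)/\ln C$ yields the factor $(e\ln C/k)^{k-1}$. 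So the $\ln C$ arises from trading the per-jump cost $C$ (selection plus survival in consecutive generations) against the combinatorial gain $(M+1)^k$, not from persistence of gap solutions; without replacing your waiting-time chaining by this consecutive-jump/restart accounting (or an equivalent), your plan does not reach the stated bound. The remaining phases of your sketch are essentially the paper's (Lemma-1-style preservation plus the known $O(\mu n\log n)$ spread), and the symmetric treatment of $0^n$ is unproblematic.
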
 

The proof of Theorem~\ref{thm:sms1} needs Lemma~\ref{lemma:sms1}, which shows that given a proper value of $\mu$, an objective vector on the Pareto front will always be maintained once it has been found. The reason is that in Algorithm~\ref{alg:sms-SPU}, the removed solution is selected from $\left\lfloor (\mu+1)\cdot(1-p_s) \right\rfloor \geq n-2k+4$ solutions in $Q$. For each objective vector on the Pareto front, whose size is $n-2k+3$, only one solution has positive $\Delta$-value. This ensures that these solutions rank among the top $n-2k+3$ solutions, and thus will not be removed. 

\begin{lemma}\label{lemma:sms1}
For SMS-EMOA solving OneJumpZeroJump, if using SPU with survival probability $p_s \in [ 1/(\mu+1),1-o(1/\mu) )$, and the population size $\mu \ge (n-2k+4)/(1-p_s)$, then an objective vector $\bm{f}^*$ on the Pareto front will always be maintained once it has been found. 
\end{lemma}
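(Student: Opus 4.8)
The plan is to track a single objective vector $\bm{f}^*$ on the Pareto front once some solution $\bm{x}$ attaining it has entered the population, and to argue that after every call to \textsc{SPU of SMS-EMOA} the (possibly new) population still contains a solution with objective vector $\bm{f}^*$. The key structural fact I would establish first is a ranking claim: in any set $Q \subseteq \{0,1\}^n$ that is a snapshot of $P \cup \{\bm{x}''\}$ during a run, among all solutions whose objective vector lies on the Pareto front of \ojzj, only one per Pareto-front objective vector can have a strictly positive hypervolume contribution $\Delta_{\bmr}(\cdot, R_v)$ in its non-dominated layer, while duplicates (same objective vector) contribute $0$. Since the Pareto front has size $n-2k+3$, this means the Pareto-optimal solutions with pairwise distinct objective vectors form an antichain of size at most $n-2k+3$ that sits in the first non-dominated layer $R_1$, and each of them is strictly better (in $\Delta$-value) than any dominated solution or any duplicate.

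Next I would use the mechanics of Algorithm~\ref{alg:sms-SPU}: the candidate pool $Q'$ from which the removed solution $\bm{z}$ is chosen has size $\lfloor |Q| \cdot (1-p_s) \rfloor = \lfloor (\mu+1)(1-p_s)\rfloor$, and by the hypothesis $\mu \ge (n-2k+4)/(1-p_s)$ we get $(\mu+1)(1-p_s) > \mu(1-p_s) \ge n-2k+4$, hence $|Q'| \ge n-2k+4$. The removed solution $\bm{z}$ minimises $\Delta_{\bmr}$ over the \emph{last} layer $R_v$ of the partition of $Q'$. I would argue that whenever $\bm{f}^*$ is represented in $Q$, then (i) if the unique ``$\Delta$-positive'' representative of $\bm{f}^*$ is not in $Q'$, some representative of $\bm{f}^*$ survives trivially because only a solution in $Q'$ is removed; and (ii) if it is in $Q'$, it lies in $R_1$ together with the at most $n-2k+2$ other distinct Pareto-front points, so the last layer $R_v$ either equals $R_1$ only when all of $Q'$ is a single antichain of distinct Pareto points — impossible since $|Q'| \ge n-2k+4 > n-2k+3$ forces either a dominated solution or a duplicate into $Q'$, creating a layer $R_v$ with $v \ge 2$ or a $0$-contributor in $R_1$ — and in all cases the minimiser $\bm z$ is some solution other than the $\Delta$-positive representative of $\bm f^*$. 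I would phrase this as: there is always a solution in $Q' \setminus \{\text{that representative}\}$ with $\Delta$-value $0$ (a duplicate or a dominated solution in the last layer), and $0$ is no larger than the strictly positive contribution, so the argmin removes that one, not our representative.

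I would then close the induction: the base case is the first time $\bm f^*$ appears, and the inductive step is exactly the per-generation argument above, noting that a newly generated $\bm x''$ cannot destroy a $\bm f^*$-representative already in $P$ (it can only be added, and then the removal step is handled as above). I should also handle the two extremal Pareto points $(k, n+k)$ and $(n+k, k)$ corresponding to $0^n$ and $1^n$: in the bi-objective SMS-EMOA the two boundary points of each non-dominated layer are preserved outright, which only makes their protection easier, so the same conclusion holds. The main obstacle I anticipate is the careful case analysis of the last non-dominated layer $R_v$ of the \emph{subsampled} set $Q'$: unlike in the non-stochastic SMS-EMOA, $Q'$ is a random subset, so I cannot assume it contains all Pareto-optimal solutions currently in $P$, and I must make the counting argument ($|Q'| \ge n-2k+4$ versus a Pareto front of size $n-2k+3$, so a pigeonhole forces a ``removable'' solution of $\Delta$-value $0$ or a lower layer) fully watertight, including the edge situation where $Q'$ happens to contain no dominated solution at all and the protection comes purely from duplicates. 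Establishing that the $\Delta$-positive representative is always top-ranked among $|Q'|$ candidates, as the excerpt's proof sketch asserts, is the crux.
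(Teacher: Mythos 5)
Your proposal is correct and follows essentially the same route as the paper's proof: bound the candidate pool by $\lfloor(\mu+1)(1-p_s)\rfloor \ge n-2k+4$, observe that for each Pareto-front objective vector at most one solution can have strictly positive $\Delta$ while duplicates contribute zero, and conclude via counting over the at most $n-2k+3$ front points that the $\Delta$-minimizer in the last layer of $Q'$ is never the unique representative of $\bm{f}^*$. The only ingredient left implicit in your ``dominated solution or duplicate'' dichotomy is the paper's explicit observation that every gap solution (with $|\bmx|_1\in[1..k-1]\cup[n-k+1..n-1]$) is dominated by the representative $\bm{x}^*$ when $\bm{f}^*$ is an inner front point, which rules out an all-distinct antichain of size $n-2k+4$ in $Q'$; your appeal to boundary-point preservation correctly covers the two extreme vectors, so the argument closes.
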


\begin{proof}
Suppose the objective vector $(a,n+2k-a), a\in[2k\dots n] \cup \{k,n+k\}$, on the Pareto front is obtained by SMS-EMOA, i.e., there exists at least one solution in $Q$ (i.e., $P\cup \{\bm{x}'\}$ in line~12 of Algorithm~\ref{alg:sms}) corresponding to the objective vector $(a,n+2k-a)$. 
Because only one solution is removed in each generation by Algorithm~\ref{alg:sms}, we only need to consider the case that exactly one solution (denoted as $\bm{x}^*$) corresponds to the objective vector $(a,n+2k-a)$. 
Since $\bm{x}^*$ cannot be dominated by any other solution, we have $\bm{x}^* \in R_1$ in the \textsc{SPU of SMS-EMOA} procedure. Moreover, $R_1$ cannot contain solutions whose number of 1-bits are in $[1.. k-1]\cup [n-k+1 .. n-1]$, because these solutions must be dominated by $\bm{x}^*$. We also have $\Delta_{\bm{r}}(\bm{x}^*,R_1)=HV_{\bm{r}}(R_1)-HV_{\bm{r}}(R_1\setminus \{\bm{x}^*\})>0$ because the region 
\begin{equation}\label{eq:sms-ojzj-lem-1}
  \{\bm{f}'\in \mathbb{R}^2\mid a-1< f'_1\le a, n+2k-a-1<f'_2\le n+2k-a\}
\end{equation}
cannot be covered by any objective vector in $\bm{f}(\{0,1\}^n)\setminus \{(a,n+2k-a)\}$. If at least two solutions in $R_1$ have the same objective vector, they must have a zero $\Delta$-value, because removing one of them will not decrease the hypervolume covered. Thus, for each objective vector $(b,n+2k-b), b\in[2k.. n] \cup \{k,n+k\}$, at most one solution can have a $\Delta$-value larger than zero, implying that there exist at most $n-2k+3$ solutions in $R_1$ with $\Delta>0$. Note that the removed solution is chosen from $\left\lfloor (\mu+1)\cdot(1-p_s) \right\rfloor \ge n-2k+4$ solutions in $Q$. Thus, $\bm{x}^*$ will not be removed because it is one of the best $n-2k+3$ solutions.
\end{proof}

The proof idea of Theorem~\ref{thm:sms1} is to divide the optimization procedure into two phases, where the first phase aims at finding the whole inner part $F_I^*$ of the Pareto front, and the second phase aims at finding the two extreme Pareto optimal solutions $1^n$ and $0^n$. By using SPU, the population can preserve some dominated solutions to gradually reach $1^n$ (and $0^n$) instead of generating them directly from the solutions in $F_I^*$ through mutation. 

\begin{proof}[Proof of Theorem~\ref{thm:sms1}] 
We divide the optimization procedure into two phases:
the first phase starts after initialization and finishes until the inner part $F_I^*$ of the Pareto front is found; the second phase starts after the first phase and finishes until the extreme Pareto optimal solution $1^n$ is found. Note that the analysis for finding $0^n$ holds similarly.

Since the objective vectors in $ F_I^* $ will always be maintained by Lemma~\ref{lemma:sms1}, we could use the conclusion of Theorem~1 in~\cite{bian23stochastic}, which proves that the expected number of fitness evaluations of the first phase is $ O(\mu (n \log n + k^k)) $, where the term $O(\mu k^k)$ is the expected number of fitness evaluations for finding one objective vector in $F_I^*$ when event $E$, i.e., any solution in the initial population has at most $(k-1)$ or at least $(n-k+1)$ 1-bits, happens.
By Chernoff bound and $n-2k=\Omega(n)$, an initial solution has at most $(k-1)$ or at least $(n-k+1)$ 1-bits with probability $\exp(-\Omega(n))$, implying that event $E$ happens with probability $\exp(-\Omega(n))^{\mu}$. Thus, the term $O(\mu k^k)$ can actually be omitted, implying that the expected number of fitness evaluations of the first phase is $O(\mu n \log n)$. 

Next, we consider the second phase. By employing SPU, any solution (including dominated ones) can survive into the next generation with a probability of at least $p_s$. This means that the population can preserve some dominated solutions to gradually reach $1^n$. We assume a ``jump" to be an event where a solution $\bm{x}$ with $|\bm{x}|_1 \in [n-k .. n-1]$ is selected, and a new dominated solution closer to $1^n$ is generated and preserved. Thus, $1^n$ can be reached more easily through multiple jumps across the gap of dominated solution set (i.e., $\{\bm{x} \mid |\bm{x}|_1 \in [n-k+1 .. n-1]\}$). We refer to the dominated solutions along the multiple jumps as ``stepping stones" and assume that $1^n$ can be reached through $M$ stepping stones. 
% We consider $M+1$ jumps from $\bm{x}$ with $n-k$ 1-bits to the Pareto optimal solution $1^n$. 
Let $p_i$ denote the probability that, after selecting the $i$-th stepping-stone solution as a parent, the next stepping-stone solution is successfully generated. Next, we consider consecutive $ M+1 $ jumps for generating $1^n$. Any failure during the intermediate jumps will result in restarting the process from the beginning. The first jump starts by selecting the solution $\bm{x}$ with $n-k$ 1-bits, generating the first stepping-stone solution, and preserving it, with a probability of at least $p_0p_s/\mu$. For the $i$-th jump ($1 \leq i \leq M-1$), the $(i+1)$-th stepping-stone solution is generated from the previous one and preserved with a probability of at least $p_ip_s/\mu$. For the final jump, the Pareto optimal solution $1^n$ found must be preserved, and the jump occurs with a probability of $p_M/\mu$. Therefore, the consecutive $M+1$ jumps can happen with probability at least 
% successful jumps is at least
% jumping from the solution $\bm{x}$ such that $|\bm{x}|_1 = n-k$ to the extreme Pareto optimal solution $1^n$ in one breath is at least
% , and let $ p_r $ denotes the probability of a stepping-stone solution being removed in each generation. In each generation, let $A_i$ denote the event that the $i$-th stepping-stone solution is selected and jump to the next stepping stone, and $B_i$ denote the event that the $i$-th stepping-stone solution is removed. Then the probability of the $i$-th stepping-stone solution being selected and jumping to the next stepping stone before being removed is
% \begin{equation}\label{eq:conditional_prob}
%   \Pr[A_i\mid A_i \cup B_i] \ge \frac{\frac{1}{\mu}\cdot p_i}{p_r + \frac{1}{\mu} \cdot p_i} \ge \frac{p_i}{\mu p_r + 1},
% \end{equation}
% where the first inequality holds because the $i$-th stepping-stone solution is selected with a probability of at least $1/\mu$ and $a\cdot p_i / (p_r + a\cdot p_i)$ decreases with $a>0$. By the inequality~\eqref{eq:conditional_prob}, the probability of a successful attempt of $M+1$ successful jumps is at least
% jumping from the solution $\bm{x}$ such that $|\bm{x}|_1 = n-k$ to the extreme Pareto optimal solution $1^n$ in one breath is at least
\begin{equation}\label{eq:one-breath}
\begin{aligned}
  &\frac{1}{\mu}\left(\frac{p_s}{\mu}\right)^M \prod_{i=0}^M p_i.
\end{aligned}
\end{equation}
Then, we consider the probability $p_i$ of each successful jump. Let $k_i$ denote the distance of the jump from the $i$-th stepping-stone solution to the next one. That is, the $(i+1)$-th stepping-stone solution can be generated by flipping $k_i$ 0-bits of the $i$-th stepping-stone solution from the remaining $k-\sum_{j=0}^{i-1}k_j$ 0-bits while keeping the other bits unchanged. Thus, after selecting the $i$-th stepping-stone solution, the next stepping-stone solution can be generated if crossover is not performed (whose probability is $1-p_c$), and only $k_i$ $0$-bits are flipped by bit-wise mutation (whose probability is $(1-1/n)^{n-k_i}\cdot \binom{k-\sum_{j=0}^{i-1}k_j}{k_i}/n^{k_i}$). We have
\begin{equation}\label{eq:p-i}
\begin{aligned}
  p_i &\geq (1-p_c) \cdot \bigg( 1-\frac{1}{n} \bigg)^{n-k_i} \cdot \frac{\binom{k-\sum_{j=0}^{i-1}k_j}{k_i}}{n^{k_i}}\\
  &\geq \frac{\binom{k-\sum_{j=0}^{i-1}k_j}{k_i}}{e n^{k_i}}\cdot (1-p_c),
\end{aligned}
\end{equation}
where $0\le i\le M$. Combining all $p_i$, their product
\begin{equation}\label{eq:product_p_i}
\begin{aligned}
  \prod_{i=0}^M p_i \ge \frac{(1-p_c)^{M+1}}{e^{M+1}n^k} \cdot \frac{k!}{k_0! \cdot k_1!\cdot k_2! \cdots k_M!},
\end{aligned}
\end{equation}
where the inequality holds by $\sum^M_{j=0} k_j= k$ and $\binom{k}{k_0} \binom{k-k_0}{k_1}\cdots \binom{k-\sum_{j=0}^{M-1}k_j}{k_M} = k!/(k_0! k_1! \cdots k_M!)$. By substituting Eq.~\eqref{eq:product_p_i} into Eq.~\eqref{eq:one-breath} and considering all possible jump distances, we can derive that the probability of a successful trail of $M+1$ consecutive jumps is at least
% The probability of jumping from the solution $\bm{x}$ such that $|\bm{x}|_1 = n-k$ to the extreme Pareto optimal solution $1^n$ in one breath is at least
\begin{equation}
\begin{aligned}\label{eq:prob_attempts}
  &\sum_{k_0+\cdots + k_M = k}\frac{(1-p_c)}{e\mu n^k}\left(\frac{1}{C}\right)^M \cdot \frac{k!} {k_0!\cdot k_1!\cdots k_M!}\\
  &= \frac{(1-p_c)}{e\mu n^k}\left(\frac{1}{C}\right)^M \cdot\sum_{k_0+\cdots + k_M = k} \frac{k!}{k_0!\cdot k_1!\cdots k_M!}\\
  &= \frac{(1-p_c)(M+1)^k}{e\mu n^k}\left(\frac{1}{C}\right)^M,
\end{aligned}
\end{equation}
where we let $C \coloneqq e\mu/(p_s(1-p_c))$, and the last equality holds by the \textit{Multinomial Theorem}, which states that for any positive integer $k$ and non-negative integers $k_0, k_1, \ldots, k_M$ such that $k_0 + k_1 + \cdots + k_M = k$, $(a_0 + a_1 + \cdots + a_M)^k = \sum_{k_0+\cdots+k_M=k} a_0^{k_0} a_1^{k_1} \cdots a_M^{k_M}\cdot k!/(k_0!k_1!\cdots k_M!) $. When $a_0=a_1=\cdots =a_M=1$, we have $(M+1)^k = \sum_{k_0+\cdots + k_M = k} k!/(k_0! k_1!\cdots k_M!)$. Thus, the Pareto optimal solution $1^n$ can be found in at most $(e\mu n^k/(1-p_c)) \cdot C^M / (M+1)^k$ trails of $M+1$ consecutive jumps in expectation. Because each trail needs at most $M+1$ generations, 
% Leaving the $i$-th stepping stone implies that the $i$-th stepping stone is either removed or successfully jumps to the next stepping stone, i.e., $A_i \cup B_i$ occurs, which is the condition of the Eq.~\eqref{eq:conditional_prob}. Let the geometric random variable $X$ denote the number of generations spent before leaving the $i$-th stepping stone, with the success probability of $X$ given by $\Pr[A_i \cup B_i]$. Therefore, we have
% \begin{equation}
%   \mathbb{E}[X] = \frac{1}{\Pr[A_i \cup B_i]} \le \frac{1}{p_r+\frac{1}{\mu}\cdot p_i}\le \frac{1}{p_r}.
% \end{equation}
the expected number of generations for finding $1^n$ is at most
\begin{equation}
\begin{aligned}\label{eq:runtime_with_m}
  (M+1)\cdot \frac{e\mu n^k \cdot C^M}{(1-p_c)(M+1)^k} = O\bigg(\frac{\mu n^k \cdot C^M}{(M+1)^{k-1} }\bigg),
\end{aligned}
\end{equation}
where the crossover probability $1-p_c = \Omega(1)$.

Now, we are to minimize Eq.~\eqref{eq:runtime_with_m} by setting a proper value of $M$, and thus we can get a tighter upper bound. Equivalently, our aim is to maximize
\begin{equation}\label{f_M}
  f(M) = (M+1)^{k-1}/C^M.
\end{equation}
Taking the derivative of $f(M)$ with respect to $M$, we have
$f'(M) = ((M+1)^{k-2}/C^M) (k-1-(M+1)\ln C)$. When $ k \leq e\ln C $, we take $M=0$, and then Eq.~\eqref{eq:runtime_with_m} becomes $ O(\mu n^k) $. When  
$ k > e\ln C $, and given that $C=e\mu/(p_s(1-p_c))>e$, setting $f'(M)=0$ implies that $f(M)$ reaches the maximum when $M = (k-1)/\ln C - 1$. We take $M = \lceil (k-1)/\ln C - 1 \rceil $ to ensure an integer value, and then Eq.~\eqref{eq:runtime_with_m} becomes
\begin{equation}
\begin{aligned}
&O\bigg(\frac{\mu n^k}{(\frac{k-1}{\ln C})^{k-1}} \cdot C^{\frac{k-1}{\ln C}}\bigg) =O\bigg( \mu n^k \cdot \big( \frac{e\ln C}{k} \big)^{k-1} \bigg),
\end{aligned}
\end{equation}
where the equality holds by $(k-1)/\ln C = \log_{C}^{e^{k-1}}$ and thus $C^{\frac{k-1}{\ln C}}=e^{k-1}$. Therefore, the expected number of generations of the second phase, i.e., finding $1^n$, is $O(\mu n^k\cdot \min\{1,(e\ln C/k)^{k-1}\})$, where $C = e\mu/(p_s(1-p_c))$. 
% When $k> e \ln C$, it implies a exponential acceleration.

Combining the two phases, the total expected number of generations is $O(\mu n \log n + \mu n^k\cdot \min\{1,(e\ln C/k)^{k-1}\})=O(\mu n^k\cdot \min\{1,(e\ln C/k)^{k-1}\})$, where $k \geq 2$. To show the equality holds, we need to verify that $(en\ln C /k)^{k-1} \ge (e n/k)^{k-1} \ge \log n$. By defining $h(k) = (e n/k)^{k-1}$ and taking its derivative with respect to $k$, we obtain $h'(k) = (en/k)^{k-1} \left( \ln(n/k) + 1/k \right)$. Since $k < n/2$, it follows that $\ln(n/k) > 0$, making $h'(k) > 0$ and indicating that $h(k)$ increases with $k$. Given that $k \geq 2$, we have $h(k) = (e n/k)^{k-1} \geq h(2)=en/2 > \log n$. Thus, the theorem holds.	
\end{proof}

% Theorem 2 in~\cite{bian23stochastic} shows that the lower bound on the expected number of generations for SMS-EMOA solving OneJumpZeroJump is $\Omega(n^k)$. Therefore, by the analysis of Theorem~\ref{thm:sms1} and~\ref{thm:sms2}, when $k > e \ln C $, employing SPU achieves an acceleration of $\Omega((1-p_s) (k / (e \ln C))^{k-1} / \mu)$, where $C = e\mu(1-p_s)/(p_s(1-p_c))$, i.e., exponential acceleration. The acceleration comes from introducing randomness to the population update, helping SMS-EMOA move through inferior regions between Pareto optimal solutions more easily. 

Under the same conditions as the previous results $O(\mu n^k \cdot \min\{1, \sqrt{k}\mu / 2^{k/2}\})$ in~\cite{bian23stochastic} and $O(\mu n^k \cdot \min\{1, \mu / 2^{k}\})$ in~\cite{zheng2024sms}, where SPU is used with $p_s=1/2$ and crossover probability $p_c=0$, our bound in Theorem~\ref{thm:sms1} becomes $O(\mu n^k \cdot (e \ln(2e\mu)/k)^{k-1})$. Since the bound in~\cite{zheng2024sms} is tighter than that in~\cite{bian23stochastic}, we focus on comparing our result only with~\cite{zheng2024sms}. 
When $k > 2e \ln (2e\mu)$, our bound $O(\mu n^k \cdot (e \ln (2e\mu)/k)^{k-1})$ brings an improvement ratio of $\Theta(\mu (k/(2e\ln (2e\mu)))^{k-1})$, which can be exponential when $k$ is large, e.g., $k=n/8$. 
For a very small range $\log \mu \leq k < 2e \ln(2e\mu)$, our bound shows no advantage, which is because to minimize Eq.~\eqref{eq:runtime_with_m}, we choose $M = \lceil (k-1)/\ln C-1 \rceil$ to round up to the nearest integer, leading to an over-relaxation. If we instead set $M = 1$, the same bound can be obtained. The reason of our better bound is in the second phase of finding the Pareto optimal solution $1^n$: 1)~We consider $M+1$ ``jumps" across the gap between dominated solutions, and account for all possible jump sizes to find $1^n$; 2)~We select the optimal number of jumps, i.e.,  $M=\lceil (k-1)/\ln C-1 \rceil$. Note that Bian \textit{et al.}~\shortcite{bian23stochastic} considered only two fixed-size jumps, and although Zheng and Doerr~\shortcite{zheng2024sms} accounted for all possible jump sizes, they also reduced the process to two jumps.

\subsection{An Archive is Provably Helpful}\label{subsec-sms-analysis}

In the last section, we have proved that for SMS-EMOA with SPU solving \ojzj, the expected running time is $O(\mu n^k\cdot \min\{1,(e\ln C/k)^{k-1}\})$, where $C = e\mu/(p_s(1-p_c))$. From the analysis, we find that SPU benefits evolutionary search by exploring inferior regions that are close to Pareto optimal areas, but it also requires a larger population size $\mu \geq (n-2k+4)/(1-p_s)$ to preserve the Pareto optimal solutions discovered. 
The greater the randomness introduced by SPU (i.e., the larger the value of $p_s$), the larger the population size needed. 
Note that a large population size may diminish the benefit of SPU, because it will lead to a very small probability of selecting specific dominated solutions for reproduction, which is required by SPU. For example, when the population size $\mu$ is exponential w.r.t. $k$, e.g., $k=\log n$, the improvement by SPU will vanish, compared to the expected running time $O(\mu n^k)$ without SPU~\cite{bian23stochastic}.

In this section, we theoretically show that the limitation of SPU can be alleviated by using an archive.
% , which stores non-dominated solutions generated during the search. 
Once a new solution is generated, the solution will be tested if it can enter the archive. If there is no solution in the archive that dominates the new solution, then the solution will be placed in the archive, and meanwhile those solutions weakly dominated by the new solution will be deleted from the archive. Algorithmic steps incurred by adding an archive in SMS-EMOA are given as follows. In Algorithm~\ref{alg:sms}, an empty set $A$ is initialized in line~1, and
the following lines are added after line~11:
\begin{framed}\vspace{-0.8em}
  \begin{algorithmic}
  \IF{$\not \exists \bmz \in A$ such that $\bmz \succ \bmx''$}
  \STATE $A \leftarrow (A \setminus\{\bmz \in A \mid \bmx'' \succeq \bmz\}) \cup \{\bmx''\}$
  \ENDIF
\end{algorithmic}\vspace{-0.8em}
\end{framed}\noindent
The set $A$ instead of $P$ is returned in the last line. 

We prove in Theorem~\ref{thm:sms2} that if using an archive, a population size $\mu \ge 3$ is sufficient to guarantee the same running time bound of SMS-EMOA with SPU as Theorem~\ref{thm:sms1}. Note that Theorem~\ref{thm:sms1} requires $\mu \ge (n-2k+4)/(1-p_s)$, which implies that using an archive can allow a small population size and thus bring speedup. 
% We only provide the proof sketch of Theorem~\ref{thm:sms2}, and the detailed proof is provided in Appendix.

\begin{theorem}\label{thm:sms2}
  For SMS-EMOA solving OneJumpZeroJump with $n-2k=\Omega(n)$, if using SPU with survival probability $p_s\in [1/(\mu+1),(\mu-2)/(\mu+1)]$, the crossover probability $p_c = \Theta(1)$, the population size $\mu \ge 3$, and using an archive, then the expected running time for finding the whole Pareto front is $O(\mu n^k\cdot \min\{1,(e\ln C/k)^{k-1}\})$, where $C = e\mu/(p_s(1-p_c))$.
\end{theorem}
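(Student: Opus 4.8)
The plan is to keep the two-phase skeleton of the proof of Theorem~\ref{thm:sms1} but to make the archive, not the population, responsible for retaining Pareto-optimal objective vectors; this is exactly what lets the requirement $\mu\ge(n-2k+4)/(1-p_s)$ be replaced by $\mu\ge 3$. Phase~1 runs until every objective vector of the inner front $F_I^*$ has been \emph{generated} at least once — it then stays in the archive forever, since nothing dominates a Pareto-optimal point — and phase~2 runs until $1^n$ and $0^n$ have been generated. The quantitative payoff is that $\mu$ is now a constant while $p_c=\Theta(1)$ and $p_s\ge 1/(\mu+1)=\Omega(1)$, so $C=e\mu/(p_s(1-p_c))=\Theta(1)$; plugging this into the very same calculation as in Theorem~\ref{thm:sms1} yields the identical functional form $O(\mu n^k\cdot\min\{1,(e\ln C/k)^{k-1}\})$, which is nevertheless exponentially smaller in magnitude because $C$ drops from $\Theta(n)$ to $\Theta(1)$.

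First I would prove the structural lemma that replaces Lemma~\ref{lemma:sms1}: once a solution with objective vector $(n,2k)$ (equivalently $|\bmx|_1=n-k$) has been created, the population always contains such a solution until $1^n$ is found, and symmetrically for $(2k,n)$, $|\bmx|_1=k$ and $0^n$. The reason is that SMS-EMOA's boundary-preservation rule is untouched by SPU: if the $f_1$-maximal element of $Q=P\cup\{\bmx''\}$ lies in the random subsample $Q'$ then it is also the $f_1$-maximal (hence retained) boundary element of $Q'$, and if it is not in $Q'$ it is not even a removal candidate; here the hypothesis $p_s\le(\mu-2)/(\mu+1)$ guarantees $|Q'|=\lfloor(\mu+1)(1-p_s)\rfloor\ge 3$, so the removal is well defined. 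Moreover, in \ojzj\ the value $f_1=n$ is attained only by solutions with $|\bmx|_1=n-k$ and is exceeded only by $1^n$, so such a solution, once present, is the $f_1$-maximal element of the population (hence a boundary) until $1^n$ appears. Thus, after phase~1 the population holds two permanently fixed anchors at $|\bmx|_1=n-k$ and $|\bmx|_1=k$, which serve as launch pads for phase~2; this is what replaces the large-population maintenance argument entirely.

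For phase~1 I would use that the archive absorbs each offspring the instant it is created (before the population update), so it is enough to show that every value $|\bmx|_1\in[k,n-k]$ is at some point produced — no offspring needs to survive. With probability $1-\mu e^{-\Omega(n)}$ (Chernoff, using $n-2k=\Omega(n)$) the initial population lies in the inner region with $1$-bit counts clustered around $n/2$, and the complementary event is absorbed exactly as in Theorem~\ref{thm:sms1}. Conditioned on this, the $1$-bit count of the $f_1$-maximal (resp.\ $f_2$-maximal) solution is monotone non-decreasing (resp.\ non-increasing) and, by a harmonic-sum waiting-time argument, reaches $n-k$ (resp.\ $k$) in $O(\mu n\log n)$ expected generations, covering every value it passes one step at a time; the occasional moves of size $\ge 2$ and the small initial cluster gap near $n/2$ are then mopped up by single extra bit-flips issued from the adjacent solutions (each such event having probability $\Omega(1/\mu)$ per generation, since the relevant solution still has $\Omega(n)$ bits of the needed type). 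A generous accounting gives $O(\mu n^2)$ expected generations for phase~1, which the same verification as at the end of the proof of Theorem~\ref{thm:sms1} (using that $(en/k)^{k-1}$ is increasing in $k$), applied with $n$ in place of $\log n$, shows is dominated by the phase-2 term $n^k\cdot\min\{1,(e\ln C/k)^{k-1}\}$ throughout $2\le k<n/2$.

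Phase~2 then reuses, essentially verbatim, the phase-2 calculation of Theorem~\ref{thm:sms1}, now launched from the stable anchor at $|\bmx|_1=n-k$ (and symmetrically at $|\bmx|_1=k$) instead of from an element of $F_I^*$ held in a large population: a trail of $M+1$ consecutive ``jumps'' across the dominated gap towards $1^n$ succeeds with probability at least $\tfrac1\mu(p_s/\mu)^M\prod_{i=0}^M p_i$, with each $p_i$ bounded as in Eq.~\eqref{eq:p-i} — the intermediate stepping stones are dominated and hence survive with probability $\ge p_s$ under SPU, occupying the single non-anchor slot when $\mu=3$, which costs only constant factors — and summing over all jump-size patterns via the multinomial theorem and optimising $M=\lceil(k-1)/\ln C-1\rceil$ reproduces $O(\mu n^k\cdot\min\{1,(e\ln C/k)^{k-1}\})$ with $C=e\mu/(p_s(1-p_c))$; the search for $0^n$ runs after (or interleaved with) that for $1^n$ and only doubles the bound. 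Combining the two phases gives the theorem. I expect the main obstacle to be phase~1: since a population of size $\mu$ cannot hold all $n-2k+1$ vectors of $F_I^*$ (and deterministic hypervolume selection would keep the non-boundary solutions bunched up), one must argue carefully that the two boundary solutions genuinely sweep the whole interval $[k,n-k]$ under SPU — handling the moving endpoints and the skipped values — and, secondarily, verify the case analysis showing SPU's random subsampling never evicts a boundary solution.
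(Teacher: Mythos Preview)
Your boundary-preservation lemma and your Phase~2 match the paper, but your Phase~1 is where the approaches diverge and where the gap lies.

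You want the two monotone boundary sweeps to generate every $|\bmx|_1\in[k,n-k]$ so the archive absorbs the full inner front before you look for $1^n,0^n$. With $\mu$ constant this does not close: conditioned on the $f_1$-boundary advancing from $j$, the increment is $\ge 2$ with probability $\Theta((n-j)/n)$, which is $\Theta(1)$ over most of the range, so a constant fraction of the values in $[k,n-k]$ are skipped in expectation. Your ``mop-up by single bit-flips from the adjacent solutions'' cannot repair this, because reproduction draws parents from the \emph{population}, not the archive; once both boundaries have finished their sweeps the population holds solutions at $|\bmx|_1\in\{k,n-k\}$ plus one wildcard, and a skipped value $j$ in the middle is now $\Omega(n)$ bit-flips away from anything the algorithm can select. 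You flag this as the main obstacle, correctly, but do not resolve it, and the looser budget $O(\mu n^2)$ does not help --- the difficulty is reachability, not time.

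The paper sidesteps the issue with a \emph{three}-phase decomposition. Phase~1 only has to reach the two inner-boundary $1$-bit counts $k$ and $n-k$ (your sweep without any coverage claim, $O(\mu n\log n)$). Phase~2 is your Phase~2. Phase~3 then exploits the fact that $1^n$ and $0^n$ are now the permanent boundary solutions in the population: for any first parent $\bmx$ and any missing count $d$ between $|\bmx|_1$ and $n$ (resp.\ $0$), one-point crossover with second parent $1^n$ (resp.\ $0^n$) at a suitable point produces a child with exactly $d$ ones, so a coupon-collector argument over the $n-2k+1$ inner counts finishes in $O(\mu n\log n)$ further generations. This is precisely why the hypothesis is $p_c=\Theta(1)$ rather than merely $1-p_c=\Omega(1)$: crossover does the real work of covering $F_I^*$, and your plan never invokes it in that direction.
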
 
\begin{proof}
   We divide the running process into three phases. The first phase starts after initialization and finishes until the two boundary solutions of $F_I^*$ (i.e., a solution with $k$ 1-bits and a solution with $n-k$ 1-bits) are found; the second phase starts after the first phase and finishes when $1^n$ and $0^n$ are both found; the third phase starts after the second phase and finishes until the whole Pareto front is found.

   For the first phase, we prove that the expected number of generations for finding a solution $\bm{x}$ with $ n-k$ 1-bits is $O(\mu n\log n)$, and the same bound holds for finding a solution with $k$ 1-bits analogously. First, we show that the maximum $f_1$ value among the Pareto optimal solutions in $P\cup \{\bm{x}''\}$ will not decrease, where $\bm{x}''$ is the offspring solution generated in each iteration. Let $J$ denote the set of Pareto optimal solutions in $P \cup \{\bm{x}''\}$ with the maximum $f_1$ value. By definition, all solutions in $J$ are Pareto optimal, which means that they all belong to $R_1$ in the non-dominated sorting procedure. Furthermore, since all solutions in $J$ have the maximum $f_1$ value, there is at least one solution $\bm{x}^* \in J$ that is a boundary solution in $R_1$, which has the highest priority for SMS-EMOA to preserve. Since the survival probability $p_s$ satisfies $p_s \in [1/(\mu+1), (\mu-2)/(\mu+1)]$, the removed solution is selected from $\lfloor (\mu+1) \cdot (1-p_s) \rfloor \geq 3$ solutions in $P \cup \{\bm{x}''\}$. Thus, if $\bm{x}^*$ is included in the survival selection, SMS-EMOA will directly preserve it, as it is one of the top two solutions. Note that for the bi-objective problem \ojzj, there can be at most two boundary solutions in $R_1$. Thus, the maximum $f_1$ value among the Pareto optimal solutions in $P\cup \{\bm{x}''\}$ will not decrease.
   
   During population initialization, as stated in Theorem~\ref{thm:sms1}, by the Chernoff bound and $n-2k=\Omega(n)$, there is at least one solution with the number of 1-bits between $k$ and $n-k$,  with a probability of $1 - \exp(-\Omega(n))^\mu$. %Let $\bm{x}_0$ denote the first solution with between $k$ and $n-k$ 1-bits. 
   Now, we consider the increase of the maximum $f_1$ value among the Pareto optimal solutions found. In each generation, SMS-EMOA selects the Pareto optimal solution $\bm{x}$ with the maximum $f_1$ value as a parent solution (whose probability is $1/\mu$), and generates a solution with more 1-bits if crossover is not performed (whose probability is $1-p_c$) and only one of the 0-bits in $\bmx$ is flipped by bit-wise mutation (whose probability is $((n-|\bm{x}|_1)/n)\cdot (1-1/n)^{n-1}$). This implies that the probability of generating a solution with more than $|\bm{x}|_1$ 1-bits in one generation is at least
   \begin{equation}
     \frac{1}{\mu}\cdot (1-p_c)\cdot \frac{n-|\bm{x}|_1}{n}\cdot \Big(1-\frac{1}{n}\Big)^{n-1}\ge (1-p_c)\cdot \frac{n-|\bm{x}|_1}{e\mu n}.
   \end{equation}
   Thus, the expected number of generations for increasing the maximum $f_1$ value to $n$, i.e., finding a solution with $n-k$ 1-bits, is at most $(1/ (1-p_c)) \cdot \sum_{i=k}^{n-k-1} e\mu n/(n-i)=O(\mu n \log n)$, where the equality holds by $p_c=\Theta(1)$. That is, the expected number of generations of the first phase is $O(\mu n\log n)$. 

   For the second phase, the analysis is the same as that of the second phase in Theorem~\ref{thm:sms1}. Before reaching the extreme Pareto optimal solution $1^n$, the boundary Pareto optimal solution with $n-k$ 1-bits is always the Pareto optimal solution with the maximum $f_1$ value and thus preserved in the population. Using SPU, the process of reaching $1^n$ can be constructed as jumping across the gap of the dominated solution set (i.e., $\{\bm{x} \mid |\bm{x}|_1 \in [n-k+1.. n-1]\}$) through $M$ stepping-stone solutions, and any failure during the intermediate jumps will result in restarting the process from the solution with $n-k$ 1-bits. By setting proper values of $M$, this process leads to an expected number of generations, $O(\mu n^k\cdot \min\{1,(e\ln C/k)^{k-1}\})$, where $C = e\mu/(p_s(1-p_c))$.

   Finally, we consider the third phase, and will show that SMS-EMOA can find the whole Pareto front in $O(\mu n\log n)$ expected number of generations. Note that after the second phase, $0^n$ and $1^n$ must be maintained in the population $P$. Let $D=\{j \mid \exists \bm{x}\in A, |\bmx|_1 = j\}$, where $A$ denotes the archive, and we suppose $|D|=i$, i.e., $i$ points on the Pareto front have been found in the archive. Note that $i \geq 2$ as $0^n$ and $1^n$ have been found. Assume that in the reproduction procedure, a solution $\bm{x}$ with $|\bm{x}|_1 = j$ ($j \in [0..n]$) is selected as one parent. If the other selected parent solution is $1^n$, then for any $d\in [j+1..n-k] \setminus D$, there must exist a crossover point $d'$ such that exchanging the first $d'$ bits of $\bm{x}$ and $1^n$ can generate a solution with $d$ 1-bits. If the other selected parent is $0^n$, then for any $d\in [k..j-1] \setminus D$, there must exist a crossover point $d'$ such that exchanging the first $d'$ bits of $\bm{x}$ and $0^n$ can generate a solution with $d$ 1-bits. The newly generated solution can keep unchanged if no bits are flipped in bit-wise mutation. Note that the probability of selecting $1^n$ (or $0^n$) as a parent solution is $1/\mu$. Thus, the probability of generating a new point on the Pareto front is at least
\begin{equation}
\begin{aligned}
  &\frac{1}{\mu}\cdot p_c \cdot \frac{n-2k+3 - |D|}{n} \cdot \left(1-\frac{1}{n}\right)^n \\
  &\ge \frac{p_c(n-2k+3-|D|)}{e\mu n}.
\end{aligned}
\end{equation}
Then, we can derive that the expected number of generations of the third phase (i.e., for finding the whole Pareto front) is at most $\sum_{i=2}^{n-2k+2} O(\mu n/(n-2k+3-i)) = O(\mu n\log n)$.

Combining the three phases, the total expected number of generations is $O(\mu n \log n+ \mu n^k\cdot \min\{1,(e\ln C/k)^{k-1}\} + \mu n\log n) = O( \mu n^k\cdot \min\{1,(e\ln C/k)^{k-1}\})$, where $C = e\mu/(p_s(1-p_c))$. The equality holds because $k \geq 2$ and $(en\ln C /k)^{k-1} \ge \log n$, as shown in the last paragraph in the proof of Theorem~\ref{thm:sms1}. Thus, the theorem holds.	
\end{proof} 

For the case without an archive, in order to avoid losing the Pareto optimal solutions found while using SPU, the population size must be large, i.e., $\mu \geq (n-2k+4)/(1-p_s)$. However, for the case with an archive, the population size only needs to be a constant. The main reasons are: 1) Using
an archive that stores all the Pareto optimal solutions generated enables the algorithm not to worry about losing Pareto
optimal solutions, but only endeavoring to seek new Pareto optimal solutions. 2) In the context of our analysis, a constant population size is sufficient to preserve exploration-favoring solutions, i.e., the two boundary Pareto optimal solutions.

The smaller population size allows a larger probability of selecting inferior solutions that are close to Pareto optimal areas for reproduction, thus leading to speedup. 
% Specifically, without an archive, the Theorem~\ref{thm:sms1} requires $\mu \ge (n-2k+4)/(1-p_s)$, while with an archive, the population size $\mu\ge 3$ is sufficient. When both $\mu$ are set to their respective lower bounds, and with $p_s=1/4$ and $p_c = 1/2$ , using an archive brings an acceleration factor of $\Omega((n-2k)\cdot (\ln (n-2k)/ \ln (9e^2+8) )^k )$ 
% We will discuss the speedup brought by using an archive in more detail. 
When $k$ is limited, using only SPU may not bring acceleration. For example, when $k=e\ln (8en)$, if using SPU with $p_s = 1/2$, $\mu = 2n$, and $p_c = 1/2$, the expected running time of SMS-EMOA on OneJumpZeroJump is $O(\mu n^k \cdot \min\{1,(e\ln C/k)^{k-1}\})=O(n^{k+1})$ (where $C=e\mu/(p_s(1-p_c))=8en$), implying no acceleration compared to $O(\mu n^k)=O(n^{k+1})$ without SPU \cite{bian23stochastic}. However, by adding an archive and reducing the population size $\mu$ to $5$, while keeping other settings unchanged, the bound reduces to $O( n^k \cdot (e\ln(20e)/k)^{k-1})$, implying a superpolynomial reduction in the upper bound as $k=e\ln (8en)$. Note that we set $\mu=5$ to satisfy the condition of Theorem~\ref{thm:sms2}, i.e., to make $p_s = 1/2 \in [1/(\mu+1), (\mu-2)/(\mu+1)]$.
On the other hand, when $k$ is large, using SPU alone can lead to exponential acceleration, and the addition of an archive can further enhance this acceleration. For example, when $k = n/8$, if using SPU with $p_s = 1/2$, $\mu = 2n$, and $p_c = 1/2$, the expected running time is $O(\mu n^k \cdot (8e\ln (8en) / n)^{n/8-1})$, which implies an exponential reduction in the upper bound compared to $O(\mu n^k)$ without SPU. By adding an archive and setting $\mu$ to $5$, the bound reduces to $O(n^k \cdot (8e\ln(20e)/n)^{n/8-1})$, resulting in an improvement ratio of $\Theta(n (\ln(8en) / \ln (20e))^{n/8-1})$, i.e., an exponential reduction in the upper bound. 

We also note that the recent work~\cite{bian2024archive} has proven that using an archive can provide polynomial acceleration for MOEAs for the first time. For example, for SMS-EMOA solving OneMinMax, the expected running time is $O(\mu n \log n)$ both with and without an archive, but the archive allows for a constant $\mu$, achieving an acceleration of $\Theta(n)$. Our work gives another theoretical evidence for the effectiveness of using an archive, and further shows that superpolynomial or even exponential acceleration can be achieved. In our analysis with SPU, a sequence of $M+1$ consecutive jumps are required, which implies continuously selecting specific solutions for reproduction; using an archive allows a small population size, which increases the selection probability significantly and thus leads to greater acceleration.

%This is because SPU enables the preservation of promising dominated solution (denoted as $\bm{z}^*$). However, a large population size makes it difficult to select $\bm{z}^*$ as parent solution before it is removed. 

\section{Running Time Analysis of NSGA-II}

We first introduce NGSA-II with SPU, and re-prove an upper bound on its expected running time for solving \ojzj, which is much tighter than the previously known bound~\cite{bian23stochastic}. Then, we prove that using an archive is helpful in this case, reducing the running time.

\subsection{NSGA-II with SPU}

The NSGA-II algorithm as presented in Algorithm~\ref{alg:nsgaii} adopts a $(\mu+\mu)$ steady state mode. It starts from an initial population of $\mu$ random solutions (line~1). In each generation, it uses fair selection to select each solution in $P$ once (the order of solution is random) to form the parent population $Q$ (line~4). Then, for each pair of the solutions in $Q$, one-point crossover and bit-wise mutation operators are applied sequentially to generate two offspring solutions $\bmx''$ and $\bmy''$ (lines~6--12), where the one-point crossover is applied with probability $p_c$.
After $\mu$ offspring solutions have been generated in $P'$, the worst $\mu$ solutions in $P\cup P'$ are removed by using the \textsc{Population Update of NSGA-II} subroutine in Algorithm~\ref{alg:nsgapopdate}. The subroutine partitions a set $Q$ into non-dominated sets $R_1,\ldots,R_v$ (line~1), where $R_1$ contains all the non-dominated solutions in $Q$ and $R_i$ ($i\ge 2$) contains all the non-dominated solutions in $Q \setminus \cup_{j=1}^{i-1} R_j$.
Then, the solutions in $R_1, R_2,\ldots, R_v$ are added into the next population, until the population size exceeds $|Q|/2$ (lines~2--4). For the critical set $R_i$ whose inclusion makes the population size larger than $|Q|/2$, the crowding distance is computed for each of the contained solutions (line~6).
Crowding distance reflects the level of crowdedness of solutions in the population. For each objective $f_j $, $1\le j\le m$, the solutions in $R_i$ are sorted according to their objective values in ascending order, and assume the sorted list is $\bmx^1,\bmx^2,\ldots,\bmx^k$. Then, the crowding distance of the solution $\bmx^l$ with respect to $f_j$ is set to $\infty$ if $l\in \{1,k\}$ and $(f_j(\bmx^{l+1})-f_j(\bmx^{l-1}))/(f_j(\bmx^k)-f_j(\bmx^1))$ otherwise.
The final crowding distance of a solution is the sum of the crowding distance with respect to each objective. 
Finally, the solutions in $R_i$ are selected to fill the remaining population slots where the solutions with larger crowding distance are preferred (line~7).

\begin{algorithm}[tb]
	\caption{NSGA-II}
	\label{alg:nsgaii}
	\textbf{Input}: objective functions $f_1,f_2\ldots,f_m$, population size $\mu$, probability $p_c$ of using crossover\\
	\textbf{Output}: $\mu$ solutions from $\{0,1\}^n$
	\begin{algorithmic}[1]
		\STATE $P\leftarrow \mu$ solutions uniformly and randomly selected from $\{0,\! 1\}^{\!n}$ with replacement;
		\WHILE{criterion is not met}
		\STATE let $P'=\emptyset, i=0$; 
      \STATE use fair selection to generate a parent population $Q$;
      \FOR{each pair of the parent solutions $\bmx$ and $\bmy$ in $Q$}
		\STATE sample $u$ from uniform distribution over $[0, 1]$;
		\IF{$u<p_c$}
		\STATE apply uniform crossover on $\bmx$ and $\bmy$ to generate two solutions $\bmx'$ and $\bmy'$
		\ELSE 
		\STATE set $\bmx'$ and $\bmy'$ as copies of $\bmx$ and $\bmy$, respectively
		\ENDIF
		\STATE apply bit-wise mutation on $\bmx'$ and $\bmy'$ to generate $\bmx''$ and $\bmy''$, respectively, and add $\bmx''$ and $\bmy''$ into $P'$
		\ENDFOR
		\STATE $P\leftarrow$ \textsc{Population Update} $(P\cup P')$;
		\ENDWHILE
		\RETURN $P$
	\end{algorithmic}
\end{algorithm}
\begin{algorithm}[t]
	\caption{\textsc{Population Update of NSGA-II} ($Q$)~}
	\label{alg:nsgapopdate}
	\textbf{Input}: a set $Q$ of solutions\\
	\textbf{Output}: $|Q|/2$ solutions from $Q$
	\begin{algorithmic}[1] %[1] enables line numbers
		\STATE partition $Q$ into non-dominated sets $R_1,R_2,\ldots, R_v$;
		\STATE let $O=\emptyset$, $i=1$;
		\WHILE{$|O\cup R_i|<|Q|/2$}
		\STATE $O=O\cup R_i$, $i=i+1$
		\ENDWHILE
		\STATE assign each solution in $R_i$ with a crowding distance; 
		\STATE add $|Q|/2-|O|$ solutions in $R_i$ with the largest crowding distance into $O$;
		\RETURN $O$
	\end{algorithmic}
\end{algorithm}

Next, we introduce the stochastic population update (SPU) method for NSGA-II. During population updates, SPU randomly selects a fixed proportion $p_s$ of the current population and the offspring solutions to directly survive into the next generation and the removed part is selected from the rest solutions. This implies that each solution, including the worst solution in the population, has at least a probability $p_s$ of surviving to the next generation. Similar to the definition in~\cite{bian23stochastic}, \textsc{SPU of NSGA-II} as presented in Algorithm~\ref{alg:nsga-SPU} is used to replace the original \textsc{Population Update} procedure in line~14 of Algorithm~\ref{alg:nsgaii}. Note that $p_s$ is set to $1/4$ in~\cite{bian23stochastic}, while we consider a general $p_s$ here. 

\begin{algorithm}[t!]
	\caption{\textsc{SPU of NSGA-II} ($Q$)~}
	\label{alg:nsga-SPU}
	\textbf{Input}: a set $Q$ of solutions\\
	\textbf{Output}: $|Q|/2$ solutions from $Q$
	\begin{algorithmic}[1]
		\STATE $Q'\leftarrow \lfloor |Q|\cdot(1-p_s)\rfloor$ solutions uniformly and randomly selected from $Q$ without replacement;
		\STATE partition $Q'$ into non-dominated sets $R_1,R_2,\ldots, R_v$;
		\STATE let $O=\emptyset$, $i=1$;
		\WHILE{$|O\cup R_i|<\lfloor|Q|(1/2-p_s)\rfloor$}
		\STATE $O=O\cup R_i$, $i=i+1$
		\ENDWHILE
		\STATE assign each solution in $R_i$ with a crowding distance; 
		\STATE add $\lfloor|Q|(1/2-p_s)\rfloor-|O|$ solutions in $R_i$ with the largest crowding distance into $O$
		\RETURN $Q\setminus (Q'\setminus O)$
	\end{algorithmic}
\end{algorithm}

The expected running time of NSGA-II with SPU for solving the OneJumpZeroJump problem with $k> 8e^2$ has been proven to be $O(\mu k(n/2)^k)$~\cite{bian23stochastic}, which is better than that, i.e., $O(\mu n^k)$, of the original NSGA-II~\cite{doerr2023nsgaojzj}. Here, we re-prove a tighter upper bound on the expected running time of NSGA-II with SPU for solving OneJumpZeroJump, as shown in Theorem~\ref{thm:NSGA2-1}. It is also more general, as it considers a survival probability $p_s \in [1/2\mu, 1/2 - o(1/\mu))$, 
% and a crossover probability $1-p_c = \Omega(1)$,
rather than the specific probabilities $p_s = 1/4$
% and $p_c = 0$ 
analyzed in~\cite{bian23stochastic}.

\begin{theorem}\label{thm:NSGA2-1}
  For NSGA-II solving OneJumpZeroJump with $n-2k=\Omega(n)$, if using SPU with survival probability $p_s\in [1/(2\mu),1/2-o(1/\mu))$, the crossover probability $1-p_c = \Omega(1)$, and the population size $\mu\ge 4(n-2k+3)/ (1-2p_s)$, then the expected running time for finding the whole Pareto front is $O( \mu n^k\cdot \min\{1,(e\ln C/k)^{k-1}\})$, where $C = e/(p_s(1-p_c))$.
\end{theorem}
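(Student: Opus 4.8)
The plan is to follow the two-phase structure of the proof of Theorem~\ref{thm:sms1}, re-deriving each ingredient for the $(\mu+\mu)$ fair-selection mechanism of NSGA-II; this mechanism is exactly what turns the constant $C=e\mu/(p_s(1-p_c))$ of SMS-EMOA into $C=e/(p_s(1-p_c))$. First I would prove the NSGA-II analogue of Lemma~\ref{lemma:sms1}: once an objective vector on the Pareto front has been found, a solution realising it is never lost again, provided $\mu\ge 4(n-2k+3)/(1-2p_s)$. The key observation is that in Algorithm~\ref{alg:nsga-SPU} the part of the population selected NSGA-II-style has size $\lfloor |Q|(1/2-p_s)\rfloor=\lfloor\mu(1-2p_s)\rfloor\ge 4(n-2k+3)$, while in the first non-dominated front $R_1$ --- which, once the inner front $F_I^*$ has been found, consists only of Pareto-optimal solutions --- at most $2(n-2k+3)$ solutions can have positive crowding distance: for each of the at most $n-2k+3$ distinct Pareto-front objective vectors, in the $f_1$-sorted list only at most two copies are ``boundary'' copies and receive a positive (or $\infty$) value, whereas every interior duplicate receives exactly $0$. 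Hence, whether $R_1$ is the critical set or not, a representative of each already-found vector survives the NSGA-II-style step (and any solution not drawn into $Q'$ survives automatically). This is where the factor $4$ (versus $1$ for SMS-EMOA) enters: duplicated objective vectors roughly double the number of solutions that must be protected. The same lemma also keeps the extreme solutions $1^n,0^n$, once found, in the population.

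Then I would split the run into Phase~1, which ends when all of $F_I^*$ has been found, and Phase~2, which ends when $1^n$ is found ($0^n$ is symmetric). For Phase~1, combining the maintenance lemma with the standard analysis of NSGA-II on \ojzj~\cite{bian23stochastic,doerr2023nsgaojzj} gives $O(\mu n\log n)$ fitness evaluations: by fair selection every population member is used once per generation, so the largest $f_1$-value among the Pareto-optimal solutions in $P\cup P'$ never decreases and increases by at least one with probability $\Omega((n-|\bmx|_1)/n)$ per generation, covering both extreme points in $O(n\log n)$ generations, after which all $O(n)$ interior Pareto-front vectors are filled in $O(n\log n)$ further generations; as in Theorem~\ref{thm:sms1} the unfavourable initialisation event contributes only $\exp(-\Omega(n))$ and is absorbed.

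Phase~2 is the heart of the proof and follows the ``jumps across the gap'' analysis of Theorem~\ref{thm:sms1} with one decisive simplification. I would again regard reaching $1^n$ from the (by the lemma, always present) solution with $n-k$ one-bits as a chain of $M+1$ jumps of sizes $k_0,\dots,k_M$ with $\sum_i k_i=k$ through $M$ dominated stepping stones, restarting from scratch on any intermediate failure. The difference from SMS-EMOA is that, thanks to fair selection, a stepping stone that survived the previous generation --- which, by the SPU guarantee, happens with probability at least $p_s$ regardless of its (dominated) quality --- is used as a parent in the next generation \emph{with certainty}, so there is no $1/\mu$ selection factor per jump. Consequently one successful chain has probability at least $p_s^M\prod_{i=0}^M p_i$, with $p_i\ge\frac{1-p_c}{e}\binom{k-\sum_{j<i}k_j}{k_i}/n^{k_i}$ exactly as in Eq.~\eqref{eq:p-i}; summing over all jump-size tuples via the Multinomial Theorem as in Eqs.~\eqref{eq:product_p_i}--\eqref{eq:prob_attempts}, now with $C=e/(p_s(1-p_c))$, yields a per-chain success probability $\Omega((1-p_c)(M+1)^k/(e n^k C^M))$, hence $O(n^k C^M/(M+1)^{k-1})$ generations (using $1-p_c=\Omega(1)$); optimising over $M$ with $M=\lceil(k-1)/\ln C-1\rceil$ exactly as in Theorem~\ref{thm:sms1} gives $O(n^k\min\{1,(e\ln C/k)^{k-1}\})$ generations, i.e.\ $O(\mu n^k\min\{1,(e\ln C/k)^{k-1}\})$ fitness evaluations since each generation costs $\mu$ evaluations. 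Adding the Phase~1 and Phase~2 bounds and using $(en\ln C/k)^{k-1}\ge\log n$ (shown at the end of the proof of Theorem~\ref{thm:sms1}) gives the claimed bound.

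I expect the maintenance lemma to be the main obstacle, since NSGA-II's crowding-distance-based truncation interacts with the random pre-selection of $Q'$ in SPU, and pinning down the precise population-size threshold requires a careful count of how many solutions of each objective vector can carry positive crowding distance in $R_1$. A secondary subtlety is making rigorous the claim that fair selection removes the per-jump $1/\mu$ factor: one must verify that the relevant stepping stone actually lies in the \emph{population} $P$ (not merely in the offspring set $P'$) at the beginning of the generation in which it is to act as a parent, which is precisely what the probability-$p_s$ survival guarantee of SPU ensures.
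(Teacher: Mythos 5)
Your proposal is correct and follows essentially the same route as the paper: a maintenance lemma based on counting positive-crowding-distance solutions in $R_1$ under the SPU truncation, a phase finding $F_I^*$ in $O(n\log n)$ generations, and the $M+1$-jump argument where fair selection removes the $1/\mu$ factor, giving $C=e/(p_s(1-p_c))$ after optimising $M=\lceil (k-1)/\ln C-1\rceil$. One small correction: since the crowding distance is computed by sorting with respect to \emph{each} objective and ties among duplicates may be ordered differently per objective, up to $4$ (not $2$) solutions per Pareto-front objective vector can have positive crowding distance, which is exactly why the threshold $\lfloor\mu(1-2p_s)\rfloor\ge 4(n-2k+3)$ is needed; your argument still goes through unchanged with this count.
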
 

Before proving Theorem~\ref{thm:NSGA2-1}, we first present Lemma~\ref{lemma:nsga2}, which shows that given a proper value of $\mu$, an objective vector on the Pareto front will always be maintained once it has been found

\begin{lemma}\label{lemma:nsga2}
For NSGA-II solving OneJumpZeroJump, if using SPU with survival probability of $p_s \in [ 1/2\mu,1/2-o(1/\mu) )$, and a population size $\mu$ such that $\mu \ge 4(n-2k+3)/(1-2p_s)$, then an objective vector $\bm{f}^*$ on the Pareto front will always be maintained once it has been found. 
\end{lemma}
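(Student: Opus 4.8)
The plan is to mirror the argument used for Lemma~\ref{lemma:sms1}, but adapted to the $(\mu+\mu)$ structure and the crowding-distance-based truncation of NSGA-II. Suppose at some generation the population $P$ (or more precisely the combined set $Q = P \cup P'$) contains a solution whose objective vector is $\bm{f}^* = (a, n+2k-a)$ with $a \in [2k..n] \cup \{k, n+k\}$. I want to show that after the \textsc{SPU of NSGA-II} step at least one solution with objective vector $\bm{f}^*$ survives. Because NSGA-II removes $|Q|/2 = \mu$ solutions at once (rather than a single one), the key quantity to control is how many solutions in the surviving-candidate set $Q'$ can legitimately be ranked above a solution realizing $\bm{f}^*$; I need this number to be strictly below $\lfloor |Q|(1/2 - p_s)\rfloor$, so that a copy of $\bm{f}^*$ is always kept.

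The key steps, in order: (1)~Note $|Q| = 2\mu$, so $Q'$ has $\lfloor 2\mu(1-p_s)\rfloor$ solutions, and the number of solutions that finally survive out of $Q'$ is $\lfloor 2\mu(1/2 - p_s)\rfloor = \lfloor \mu(1 - 2p_s)\rfloor$. (2)~Any solution $\bm{x}^*$ realizing $\bm{f}^*$ is non-dominated in $Q$ (it is Pareto optimal for \ojzj), hence lies in $R_1$ within the non-dominated sorting of $Q'$; so it can only be discarded during the crowding-distance truncation of the critical front $R_1$. (3)~For \ojzj, the non-dominated front $R_1$ can contain solutions only at the $n - 2k + 3$ distinct Pareto-front objective vectors (plus, transiently, solutions whose $1$-bit count lies in $[2k-k .. n-k] = [k..n-k]$ that are mutually incomparable — but any solution with $1$-bit count in $[1..k-1] \cup [n-k+1..n-1]$ is dominated by $\bm{x}^*$ and hence excluded from $R_1$), so $R_1$ occupies at most $n - 2k + 3$ distinct points in objective space. (4)~Among all solutions in $R_1$ sharing a given objective vector, the crowding distance treats them identically, and the extreme (boundary) points of $R_1$ get crowding distance $\infty$; a standard fact is that when the critical front must be truncated, the solutions kept are exactly the ones with largest crowding distance, and for each occupied objective vector at least one representative is retained provided the number of kept slots is at least the number of occupied objective vectors. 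Here the number of kept slots from $R_1$ is $\lfloor \mu(1-2p_s)\rfloor \ge 4(n-2k+3)/(1-2p_s) \cdot (1-2p_s) - 1 = 4(n-2k+3) - 1 \ge n - 2k + 3$ using the hypothesis $\mu \ge 4(n-2k+3)/(1-2p_s)$; hence a representative of $\bm{f}^*$ survives the truncation. (5)~Finally, a copy realizing $\bm{f}^*$ could fail to survive only if it is not placed into $Q'$ in line~1; but since the claim only asserts that \emph{the objective vector} $\bm{f}^*$ is maintained, I may instead argue about whichever copies land in $Q'$ — if none do, then all copies are in $Q \setminus Q'$, which survives untouched by line~8 (\texttt{return} $Q \setminus (Q' \setminus O)$ keeps all of $Q \setminus Q'$), so $\bm{f}^*$ is still present. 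Combining: in every case at least one solution with objective vector $\bm{f}^*$ remains, and by induction over generations $\bm{f}^*$ is maintained forever once found.

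I expect the main obstacle to be step~(4): carefully justifying that the crowding-distance truncation of $R_1$ never evicts all representatives of a given Pareto-front objective vector. This requires two sub-facts — that duplicate objective vectors in $R_1$ receive equal crowding-distance contributions and are thus removed together only after all "rarer" points, and that after the first-front solutions are sorted per objective the number of objective vectors with finite crowding distance is bounded by $n - 2k + 3$ — plus a clean counting argument tying $\lfloor \mu(1-2p_s)\rfloor$ to $n-2k+3$ via the hypothesis on $\mu$. One subtlety is the floor functions and the $o(1/\mu)$ slack in the upper range of $p_s$: I should check that $\lfloor 2\mu(1/2 - p_s)\rfloor \ge n - 2k + 3$ holds for the stated range, which follows since $p_s < 1/2 - o(1/\mu)$ makes $2\mu(1/2 - p_s) = \mu(1-2p_s) - o(1) \ge 4(n-2k+3) - o(1)$, comfortably above $n - 2k + 3$ for $n$ large; the lower bound $p_s \ge 1/(2\mu)$ only ensures the SPU step is nontrivial and plays no role here. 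Everything else is a routine adaptation of the SMS-EMOA argument.
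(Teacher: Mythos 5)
Your overall scaffolding (work with $Q=P\cup P'$, note that copies of $\bm{f}^*$ lie in $R_1$, handle separately the case where no copy is drawn into $Q'$, and reduce everything to a counting condition on the truncation of the critical front) matches the paper's proof. The decisive step, however, is wrong. In step~(4) you assert that duplicates sharing an objective vector receive identical crowding distance and invoke the ``standard fact'' that the truncation retains at least one representative of every occupied objective vector as soon as the number of kept slots is at least the number of occupied vectors. Neither claim holds. Under the crowding-distance assignment used here, duplicates are sorted per objective and only the copies at the two ends of each block of identical vectors can receive a positive contribution; interior duplicates get crowding distance $0$, so duplicates are emphatically not treated identically. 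More importantly, the truncation keeps the globally largest crowding distances, not one representative per vector: several duplicates of the two boundary vectors carry infinite crowding distance, and block-end duplicates of other vectors carry large positive values, so with only $n-2k+3$ slots they can fill the quota and evict the sole copy of some vector $\bm{f}^*$ whose finite crowding distance is small. This is exactly why the lemma's hypothesis carries the factor $4$: the paper's argument is that the copy of $\bm{f}^*$ placed first or last among its duplicates has crowding distance $>0$, that each objective vector contributes at most $4$ solutions with positive crowding distance ($2$ objectives $\times$ $2$ block ends), hence at most $4(n-2k+3)$ such solutions exist in $R_1$, and since $\lfloor \mu(1-2p_s)\rfloor \ge 4(n-2k+3)$ (the floor can be taken without the $-1$ loss because $4(n-2k+3)$ is an integer), \emph{every} solution with positive crowding distance survives, in particular a copy of $\bm{f}^*$. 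Your weaker slot count ($\ge n-2k+3$) combined with the false retention fact does not establish the lemma, and even the bound $4(n-2k+3)-1$ you compute is not used in the way that makes the argument go through.

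A secondary issue: in step~(3) you justify that $R_1$ occupies only Pareto-front vectors by claiming every gap solution is dominated by $\bm{x}^*$; this is not true for an arbitrary $\bm{x}^*$ (for instance $\bm{x}^*=1^n$ does not dominate solutions with $|\bm{x}|_1\in[n-k+1..n-1]$, which are incomparable to it), so if you want to bound the number of distinct vectors, or of positive-crowding-distance solutions, in $R_1$, you need a different justification or must account for gap vectors possibly appearing in $R_1$ of the subsample $Q'$. Your step~(5), dealing with copies not drawn into $Q'$, is fine and consistent with the paper's treatment.
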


\begin{proof}
Suppose an objective vector $\bm{f}^*$ on the Pareto front is obtained by NSGA-II, i.e., there exists at least one solution in $Q$ (i.e., $P\cup P'$ in line~14 of Algorithm~\ref{alg:nsgaii}) corresponding to the objective vector $\bm{f}^*$. Let $J$ denote the set of solutions in $P\cup P'$ with objective vector $\bmf^*$, which are selected for competition in line~1 of Algorithm~\ref{alg:nsga-SPU} (note that $J$ is a multi-set). Then, all solutions in $J$ belong to $R_1$ in the non-dominated sorting procedure, because these solutions are Pareto optimal. If the solutions in $J$ are sorted when computing the crowding distance in line~7 of Algorithm~\ref{alg:nsga-SPU}, the solution (denoted as $\bm{x}^*$) that is put in the first or the last position among solutions in $J$ will have a crowding distance greater than $0$. For solutions with the same objective vector, they are sorted according to some objective function in the crowding distance assignment procedure, and only the first or the last one has crowding distance greater than 0. Since OneJumpZeroJump has two objectives, for each objective vector, there are at most $4$ solutions having crowding distance greater than $0$. Considering that the Pareto front size of \ojzj\ is $n-2k+3$, at most $4(n-2k+3)$ solutions in $R_1$ can have crowding distance greater than $0$.

In Algorithm~\ref{alg:nsga-SPU}, $\lfloor 2\mu \cdot (1-p_s) \rfloor$ solutions are selected for competition, and $\lfloor \mu - 2\mu \cdot p_s \rfloor \ge 4(n-2k+3)$ of them will not be removed. 
Note that $\bm{x} \in J$ is among the best $4(n-2k+3)$ solutions in $R_1$, implying $\bm{x}$ must be maintained in the next population. Thus, the lemma holds.
\end{proof}

The proof idea of Theorem~\ref{thm:NSGA2-1} is similar to that of Theorem~\ref{thm:sms1}, which divides the optimization procedure into two phases, where the first phase aims at finding the whole inner part $F_I^*$ of the Pareto front, and the second phase aims at finding the two extreme Pareto optimal solutions $1^n$ and $0^n$.

\begin{proof}[Proof of Theorem~\ref{thm:NSGA2-1}]
 We divide the optimization procedure into two phases:
the first phase starts after initialization and finishes until the inner part $F_I^*$ of the Pareto front is found; the second phase starts after the first phase and finishes until $1^n$ is found. Note that the analysis for finding $0^n$ holds similarly.

Since the objective vector in $F_I^*$ will always be maintained by Lemma~\ref{lemma:nsga2}, we could use the conclusion of Lemma 4 in~\cite{doerr2023nsgaojzj}, which shows that if an objective vector in $F_I^*$ has been found, the expected number of generations of the first phase is $O(n\log n)$. Since we have proven in the first phase of Theorem~\ref{thm:sms1} that there is at least one initial solution $\bm{x}$ such that $\bm{f}(\bm{x}) \in F_I^*$ with a probability of $1 - \exp(-\Omega(n))^\mu$, we can then derive that the expected number of generations of phase~1 is $O(n \log n)$.

Next, we consider the second phase. We also use the analysis method of $M+1$ jumps in Theorem~\ref{thm:sms1}. Specifically, by employing SPU, any solution (including dominated ones) can survive into the next generation with a probability of at least $p_s$. We also assume a ``jump" to be an event where a solution $\bm{x}$ with $|\bm{x}|_1 \in [n-k .. n-1]$ is selected, and a new dominated solution closer to $1^n$ is generated and preserved. We refer to the dominated solutions along the multiple jumps as ``stepping stones" and assume that $1^n$ can be reached through $M$ stepping stones. We consider $M+1$ consecutive jumps from $\bm{x}$ with $n-k$ 1-bits to the Pareto optimal solution $1^n$. Any failure during the intermediate jumps will result in restarting the process from the beginning. Note that in each generation, the probability of selecting a specific parent solution is changed from $1/\mu$ to $1$. 
Let $p_i$ ($1\le i\le M-1$) denote the probability that, after selecting the $i$-th stepping-stone solution as a parent solution, the next stepping-stone solution is successfully generated. Let $p_0$ denotes the probability of selecting the solution $\bm{x}$ with $n-k$ 1-bits, and generating the first stepping-stone solution. For the final jump, the Pareto optimal solution $1^n$ found must be preserved, and the jump occurs with a probability of $p_M$. Then, the probability of a successful trail of $M+1$ consecutive jumps is $p_s^M\prod_{i=0}^M p_i$. Similar to the analysis in Eq.~\eqref{eq:product_p_i}, we have
\begin{equation}
\begin{aligned}
  p_s^M\prod_{i=0}^M p_i \ge& \frac{p_s^M(1-p_c)^{M+1}}{e^{M+1}n^k} \cdot \frac{k!}{k_0! \cdot k_1!\cdot k_2! \cdots k_M!}.
\end{aligned}
\end{equation}
By considering all possible jump distances, the probability of a successful trail of $M+1$ consecutive jumps is at least
\begin{equation}
\begin{aligned}\label{eq:prob_attempts_nsga_ii}
  &\frac{p_s^M(1-p_c)^{M+1}}{e^{M+1}n^k} \cdot \sum_{k_0+\cdots + k_M = k} \frac{k!}{k_0! \cdot k_1!\cdot k_2! \cdots k_M!}\\
  &= \frac{p_s^M(1-p_c)^{M+1}(M+1)^k}{e^{M+1}n^k},
\end{aligned}
\end{equation}
where the last equality holds by the \textit{Multinomial Theorem}. Therefore, $1^n$ can be found in at most $(e n^k/(1-p_c)) \cdot (e/(p_s(1-p_c)))^M/ (M+1)^k$ trails of $M+1$ consecutive jumps in expectation. Because each trail needs at most $M+1$ generations, the expected number of generations for finding $1^n$ is at most
\begin{equation}
\begin{aligned}\label{eq:runtime_with_m_nsga_ii}
  &(M+1)\cdot \frac{e n^k}{(1-p_c)(M+1)^k} \big(\frac{e}{p_s(1-p_c)}\big)^M \\
  &= O\bigg(\frac{ n^k}{(M+1)^{k-1}}\big(\frac{e}{p_s(1-p_c)}\big)^M\bigg),
\end{aligned}
\end{equation}
Now, we need to determine an appropriate $M$ to achieve effective acceleration. Let $C \coloneqq e/(p_s(1-p_c))$. The following analysis is similar to that of $f(M)$ in Eq.~\eqref{f_M}. When $k\le e\ln C$, we take $M=0$, and then Eq.~\eqref{eq:runtime_with_m_nsga_ii} becomes $O(n^k)$. When $k> e\ln C$, we take $M = \lceil ((k-1)/\ln C)-1 \rceil $, and then Eq.~\eqref{eq:runtime_with_m_nsga_ii} becomes
\begin{equation}
\begin{aligned}
O\bigg(\frac{n^k}{(\frac{k-1}{\ln C})^{k-1}} \cdot C^{\frac{k-1}{\ln C}}\bigg) =O\bigg(n^k \cdot \big( \frac{e\ln C}{k} \big)^{k-1} \bigg),
\end{aligned}
\end{equation}
where the equality holds by $(k-1)/\ln C = \log_{C}^{e^{k-1}}$ and thus $C^{\frac{k-1}{\ln C}} = e^{k-1}$. Therefore, the expected number of generations of the second phase, i.e., finding $1^n$, is $O( n^k\cdot \min\{1,(e\ln C/k)^{k-1}\})$, where $C = e/(p_s(1-p_c))$. 
% When $k> e \ln C$, it implies a exponential acceleration.

Combining the two phases, the total expected number of generations is $O(n\log n+ n^k\cdot \min\{1,(e\ln C/k)^{k-1}\}) = O(n^k\cdot \min\{1,(e\ln C/k)^{k-1}\})$, where $C = e/(p_s(1-p_c))$. The equality holds because $k\ge 2$ and $(en\ln C/k)^{k-1} \ge \log n$ as shown in the last paragraph in the proof of Theorem~\ref{thm:sms1}. For NSGA-II, each generation requires $\mu$ fitness evaluations. Therefore, the expected number of fitness evaluations is $O(\mu n^k\cdot \min\{1,(e\ln C/k)^{k-1}\})$. Thus, the theorem holds.	
\end{proof}

Bian \textit{et al.}~\shortcite{bian23stochastic} analyzed NSGA-II solving OneJumpZeroJump with $k > 8e^2$, $p_s = 1/4$, and $p_c = 0$, deriving an expected running time of $O(\mu \sqrt{k}(n/2)^k)$. In contrast, Theorem~\ref{thm:NSGA2-1} gives a tighter bound $O(\mu n^k\cdot (e\ln (4e)/ k)^{k-1})$, with an improvement ratio of $\Theta(\sqrt{k}(k/(2e\ln (4e)))^{k-1})$, which can be exponential when $k$ is large, e.g., $k=n/8$. Our analysis leads to improvement because, in the second phase of finding the extreme Pareto optimal solution $1^n$, we consider an optimal number $M+1$ of jumps across the gap of the dominated solution set (i.e., $\{\bm{x} \mid |\bm{x}|_1 \in [n-k+1.. n-1]\}$), 
rather than a continuous sequence of $k$ jumps from $|\bm{x}|_1=n-k$ to $n$ in~\cite{bian23stochastic}.

\subsection{An Archive is Provably Helpful}

In the last section, we showed that for NSGA-II solving \ojzj, the acceleration brought by SPU is $O(\mu n^k\cdot \min\{1,(e\ln C/k)^{k-1}\})$, where $C = e/(p_s(1-p_c))$, significantly improving the running time $O(\mu n^k)$ without SPU~\cite{doerr2023nsgaojzj}. Now, we show that using an archive, the result can be further improved.

Once a new solution is generated, the solution will be tested if it can enter the archive. If there is no solution in the archive that dominates the new solution, then the solution will be placed in the archive. 
Algorithmic steps incurred by adding an archive in NSGA-II are given as follows.
For NSGA-II in Algorithm~\ref{alg:nsgaii}, an empty set $A$ is initialized in line~1, the following lines
\begin{framed}\vspace{-0.8em}
  \begin{algorithmic}
  \FOR{$\bmx''\in P'$}
  \IF{$\not \exists \bmz \in A$ such that $\bmz \succ \bmx''$}
  \STATE $A \leftarrow (A \setminus\{\bmz \in A \mid \bmx'' \succeq \bmz\}) \cup \{\bmx''\}$
  \ENDIF
  \ENDFOR
\end{algorithmic}\vspace{-0.8em}
\end{framed}\noindent
are added after line~13, and the set $A$ instead of $P$ is returned in the last line. 

We prove in Theorem~\ref{thm:NSGA2-2} that if using an archive, a population size $\mu \ge 5$ is sufficient to guarantee the same running time bound of NSGA-II with SPU as Theorem~\ref{thm:NSGA2-1}. Note that Theorem~\ref{thm:NSGA2-1} requires $\mu \ge 4(\mu-2k+3)/(1-2p_s)$, which implies that using an archive can allow a small population size and thus bring speedup.

\begin{theorem}\label{thm:NSGA2-2}
  For NSGA-II solving OneJumpZeroJump with $n-2k=\Omega(n)$, if using SPU with survival probability $p_s\in [1/(2\mu),(\mu-4)/(2\mu)]$, the crossover probability $p_c = \Theta(1)$, the population size $\mu \ge 5$, and using an archive, then the expected running time for finding the whole Pareto front is $O(\mu n^k\cdot \min\{1,(e\ln C/k)^{k-1}\})$, where $C = e/(p_s(1-p_c))$. 
\end{theorem}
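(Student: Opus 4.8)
The plan is to mirror the three-phase decomposition from the proof of Theorem~\ref{thm:sms2}, transporting it from the $(\mu+1)$ mode of SMS-EMOA to the $(\mu+\mu)$ mode of NSGA-II, and to splice in the $M{+}1$-jump machinery already built in the proof of Theorem~\ref{thm:NSGA2-1}. Phase~1 runs from initialisation until both boundary solutions of the inner front $F_I^*$ (one with $k$ 1-bits, one with $n-k$ 1-bits) lie in $P$; Phase~2 runs until $1^n$ and $0^n$ have both been generated, hence recorded in the archive $A$; Phase~3 runs until $A$ contains the whole Pareto front. Multiplying the resulting generation counts by the $\mu$ fitness evaluations per generation will give the claimed bound.

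First I would pin down the structural fact that drives everything. With $p_s\in[1/(2\mu),(\mu-4)/(2\mu)]$, the SPU subroutine (Algorithm~\ref{alg:nsga-SPU}) retains $\lfloor\mu(1-2p_s)\rfloor\ge 4$ of the solutions it draws for competition; and since \ojzj\ is bi-objective, at most four solutions of $R_1$ can carry infinite crowding distance — at most two copies of each of the two extreme objective vectors. Hence whichever extreme-$f_1$ Pareto-optimal solution of $R_1$ is drawn for competition is kept (and if it is not drawn it survives by default), so the maximum, and likewise the minimum, $f_1$-value over the Pareto-optimal solutions present in $P$ is monotone. Note that $\mu\ge 5$ is exactly the condition making the admissible $p_s$-interval nonempty. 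Given this, Phase~1 is a fitness-level argument run in both directions: fair selection picks the current max-$f_1$ (resp.\ min-$f_1$) Pareto-optimal solution every generation with probability $1$, and with probability $\Omega((1-p_c)(n-|\bmx|_1)/n)$ a single bit-flip (no crossover) pushes it one step toward $n-k$ (resp.\ $k$) 1-bits, giving $O(n\log n)$ generations, while the initial population already holds a solution with between $k$ and $n-k$ 1-bits with probability $1-\exp(-\Omega(n))^\mu$.

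For Phase~2 I would reuse the second phase of the proof of Theorem~\ref{thm:NSGA2-1} almost verbatim: before $1^n$ appears the $(n-k)$-1-bits solution is the max-$f_1$ Pareto-optimal solution and so is kept by the previous paragraph, every stepping stone with $|\bmx|_1\in[n-k+1..n-1]$ is dominated by it, and a trail of $M{+}1$ consecutive jumps across the gap — each intermediate stepping stone surviving with probability at least $p_s$ and then re-selected with probability $1$ under fair selection — succeeds with probability at least $p_s^M\prod_{i=0}^M p_i$; summing over all jump-size vectors via the Multinomial Theorem and optimising $M$ yields $O(n^k\cdot\min\{1,(e\ln C/k)^{k-1}\})$ generations with $C=e/(p_s(1-p_c))$. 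The archive removes the need for any additional preservation claim here, since $1^n$ (and symmetrically $0^n$) is banked the instant it is produced.

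Phase~3 is where the archive earns its keep and is the step I expect to be the main obstacle. After Phase~2, $0^n$ and $1^n$ are each the unique extreme-$f_1$ Pareto-optimal solution, so the structural fact keeps both of them in $P$ at the same time — and this is precisely why four survivors, not two, are needed, since arbitrary tie-breaking among up to four infinite-crowding-distance copies could otherwise discard one extreme. In a generation, fair selection draws both $0^n$ and $1^n$, with constant probability (as $\mu=\Theta(1)$) the random pairing places them in the same pair, and one-point crossover of $0^n$ with $1^n$ can realise every 1-bit count in $[0..n]$, so for each of the $n-2k+3-|D|$ still-missing Pareto-front values there is at least one good crossover point; multiplying by $p_c$ and by the $\Omega(1)$ probability that mutation leaves the offspring intact, a previously unseen Pareto-optimal objective vector enters $A$ with probability $\Omega(p_c(n-2k+3-|D|)/n)$, which sums to $O(n\log n)$ generations. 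Adding the three phases gives $O(n^k\cdot\min\{1,(e\ln C/k)^{k-1}\})$ generations — the two $O(n\log n)$ terms are absorbed exactly as at the end of the proof of Theorem~\ref{thm:sms1} — and scaling by $\mu$ evaluations per generation finishes the proof. The delicate point throughout is making both the SPU-survival of the extreme solutions and the uncontrollable fair-selection pairing go through with $\mu$ only a constant; this is the crux that the archive buys us, and the place where the argument differs most from Theorems~\ref{thm:NSGA2-1} and~\ref{thm:sms2}.
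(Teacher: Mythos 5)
Your proposal is correct and follows essentially the same route as the paper's proof: the same three-phase decomposition, the same preservation argument (at most four infinite-crowding-distance solutions in $R_1$ versus at least $\lfloor \mu-2\mu p_s\rfloor\ge 4$ guaranteed survivors, which is exactly what $p_s\le(\mu-4)/(2\mu)$ and $\mu\ge 5$ buy), the verbatim reuse of the $M{+}1$-jump analysis from Theorem~\ref{thm:NSGA2-1} for Phase~2, and a crossover-with-the-extremes argument giving $O(n\log n)$ generations for Phase~3, all scaled by $\mu$ evaluations per generation. The only cosmetic deviation is that in Phase~3 you pair $0^n$ directly with $1^n$ (constant pairing probability since $\mu=\Theta(1)$), whereas the paper pairs an arbitrary parent with $1^n$ or $0^n$ using the fact that fair selection picks each with probability $1$; both yield the same bound.
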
 

\begin{proof}
We divide the running process of NSGA-II into three phases. The first phase starts after initialization and finishes until the two boundary solutions of $F_I^*$ (i.e., a solution with $k$ 1-bits and a solution with $n-k$ 1-bits) are found; the second phase starts after the first phase and finishes when $1^n$ and $0^n$ are both found; the third phase starts after the second phase and finishes until the whole Pareto front is found.

For the first phase, we prove that the expected number of generations for finding solution $\bm{x}$ such that $|\bm{x}|_1 = n-k$ is $O(n\log n)$, and the same bound holds for finding a solution with $k$ 1-bits analogously. First, we show that the maximum $f_1$ value among the Pareto optimal solutions in $P\cup P'$ will not decrease, where $P'$ is the set of offspring solutions. Let $J$ denote the set of Pareto optimal solutions in $P \cup P'$ with the maximum $f_1$ value. By definition, all solutions in $J$ are Pareto optimal, which means they all belong to $R_1$ in the non-dominated sorting procedure. Furthermore, since all solutions in $J$ have the maximum $f_1$ value, there is at least one solution $\bm{x}^* \in J$ that is a boundary solution in $R_1$, and has an infinite crowding distance. Note that only solutions in the first and the last positions can have infinite crowding distance. As \ojzj\ has two objectives, at most four solutions in $R_1$ can have infinite crowding distance. In Algorithm~\ref{alg:nsga-SPU}$, \lfloor 2\mu \cdot (1-p_s)\rfloor$ solutions are selected for competition, and $\lfloor \mu - 2\mu \cdot p_s \rfloor \ge 4$ of them will not be removed, because the survival probability $p_s \in [1/2\mu, (\mu-4)/2\mu]$. Thus, $\bm{x}^*\in J$ is among the best $4$ solutions in $R_1$, implying that $\bm{x}^*$ must be preserved in the next population. Thus, the maximum $f_1$ value among the Pareto optimal solutions in $P\cup P'$ will not decrease. Then the analysis of the first phase is similar to that of Theorem~\ref{thm:sms2}. The main difference is that the probability of selecting a specific parent solution is changed from $1/\mu$ to $1$ because any solution in the current population will generate an offspring solution by line~4 of Algorithm~\ref{alg:nsgaii}. Then, we can derive that the expected number of generations of phase~1 is $O(n\log n)$.

For the second phase, the analysis is the same as that of the second phase in Theorem~\ref{thm:NSGA2-1}. Before reaching the extreme Pareto optimal solution $1^n$, the boundary Pareto optimal solution with $n-k$ 1-bits is always the Pareto optimal solution with the maximum $f_1$ value and thus preserved in the population. Using SPU, the process of reaching $1^n$ can be constructed as jumping across the gap of the dominated solution set (i.e., $\{\bm{x} \mid |\bm{x}|_1 \in [n-k+1, n-1]\}$) through $M$ stepping stones, and any failure during the intermediate jumps will result in restarting the process from the solution with $n-k$ 1-bits. By setting proper values of $M$, this process leads to an expected number of generations, $O( n^k\cdot \min\{1,(e\ln C/k)^{k-1}\})$, where $C = e/(p_s(1-p_c))$.

Finally, we consider the third phase. The analysis of the third phase is just similar to that of Theorem~\ref{thm:sms2}. The only difference is that the probability of selecting $1^n$ or $0^n$ as a parent solution is changed from $1/\mu$ to $1$, because the fair selection selects each solution in the population once (the order of solution is random) to form the parent population. Then, we directly derive that the expected number of generations of phase~3 is $O(n\log n)$.

Combining the three phases, the total expected number of generations is $O(n\log n + n^k\cdot \min\{1,(e\ln C/k)^{k-1}\} + n\log n) = O(n^k\cdot \min\{1,(e\ln C/k)^{k-1}\})$, where $C = e/(p_s(1-p_c))$. The equality holds because $k\ge 2$ and $(en\ln C/k)^{k-1} \ge \log n$ as shown in the last paragraph in the proof of Theorem~\ref{thm:sms1}. For NSGA-II, each generation requires $\mu$ fitness evaluations. Therefore, the expected number of fitness evaluations is $O(\mu n^k\cdot \min\{1,(e\ln C/k)^{k-1}\})$. Thus, the theorem holds.
\end{proof}

 Comparing Theorems~\ref{thm:NSGA2-1} and~\ref{thm:NSGA2-2}, we find that if using SPU, the expected running time for NSGA-II solving \ojzj\ with or without an archive is both $O(\mu n^k\cdot (e\ln C/k)^{k-1})$, where $C = e/(p_s(1-p_c))$. The difference is that using an archive allows for a small constant population size $\mu$, which can bring an acceleration factor of $\Theta(n/(1-2p_s))$, as $\mu$ is required to be at least $4(n-2k+3)/ (1-2p_s)$ if not using an archive. Compared to SMS-EMOA, using an archive does not lead to an exponential reduction in the upper bound for NSGA-II. The reason is that NSGA-II employs the $(\mu + \mu)$ update mode and fair selection, allowing it to select and explore all solutions in each generation, thereby alleviating selection pressure. This results in a smaller constant $C = e/(p_s(1-p_c))$ independent of $\mu$ in Theorems~\ref{thm:NSGA2-1} and~\ref{thm:NSGA2-2}, compared to $C = e\mu /(p_s(1-p_c))$ in Theorems~\ref{thm:sms1} and~\ref{thm:sms2}. This also implies that NSGA-II achieves a smaller expected running time than SMS-EMOA, suggesting that the $(\mu+\mu)$ mode may be more suitable for SPU than the $(\mu+1)$ mode.

\section{Experiments}

In this section, we conduct experiments on OneJumpZeroJump and the well-known multi-objective travelling salesman problem (MOTSP) \cite{ribeiro2002study}. Table~\ref{tab:ojzj} presents the results of NSGA-II solving the OneJumpZeroJump problem with size $n \in \{10, 15, 20, 25, 30\}$ and $k=3$. The settings are: population size $\mu = 8$ with an archive, and $\mu = 8(n-2k+3)$ without; survival probability $p_s = 1/2$ when using SPU; and crossover probability $p_c = 1/2$. Table~\ref{tab:ojzj_sms} presents the results of SMS-EMOA solving the OneJumpZeroJump problem with size $n \in \{10, 15, 20, 25, 30\}$ and $k=3$. The settings are: population size $\mu = 5$ with an archive, and $\mu = 2(n-2k+4)$ without; survival probability $p_s = 1/2$ when using SPU; and crossover probability $p_c = 1/2$.
We present the average number of fitness evaluations over 200 runs for three configurations: SPU only, archive only, and SPU + archive. Results show that using SPU+archive significantly reduces running time compared to using only SPU, and using only archive.
% considers an idealised scenario,
% it would be interesting to evaluate the impact of SPU and archive on more complex problems.
% Here we consider a real world combinatorial problem, travelling salesman problem (MOTSP)~\cite{ribeiro2002study}.

\begin{table*}[th!]
  \centering
  \caption{The IGD~\protect\cite{coello2004study} results (mean and standard deviation) of the four variants of NSGA-II and SMS-EMOA on the 100-cities MOTSP instance clusAB from TSPLIB~\protect\cite{reinelt_tsplibtraveling_1991}. For each MOEA, the best mean is highlighted in bold.}
  \vspace{-0.3cm}
  \label{tab:igd_results_clusAB}
  \resizebox{\textwidth}{!}{
  \begin{tabular}{lcccc}
    \toprule
    \textbf{Algorithm} & \textbf{Original Algorithm} $\mu=2|\mathcal{PF}|$ & \textbf{Only SPU $\mu=2|\mathcal{PF}|$}   & \textbf{Only Archive $\mu=\frac{1}{4}|\mathcal{PF}|$} & \textbf{SPU+Archive $\mu=\frac{1}{4}|\mathcal{PF}|$} \\
    \midrule
    SMS-EMOA & 4.0923e+4 (5.69e+3) $\dagger$ & 3.8018e+4 (6.27e+3) $\dagger$ & 2.1939e+4 (2.03e+3) $\dagger$ & \textbf{1.2201e+4 (1.66e+3)} \\
    NSGA-II & 4.0017e+4 (2.55e+3) $\dagger$ & 3.2850e+4 (3.03e+3) $\dagger$ & 1.9363e+4 (2.21e+3) $\dagger$ & \textbf{9.8081e+3 (1.49e+3)} \\
    \bottomrule
  \end{tabular}}
  \small{``$\dagger$'' indicates that the result is significantly different from that of the SPU + archive algorithm (last column), at a 95\% confidence by the Wilcoxon rank-sum test.}
\end{table*}
\begin{table*}[th!]
  \centering
  \caption{The IGD~\protect\cite{coello2004study} results (mean and standard deviation) of the four variants of NSGA-II and SMS-EMOA on the 100-cities MOTSP instance kroAB from TSPLIB~\protect\cite{reinelt_tsplibtraveling_1991}. For each MOEA, the best mean is highlighted in bold.}
  \vspace{-0.3cm}
  \label{tab:igd_results_kroAB}
  \resizebox{\textwidth}{!}{
  \begin{tabular}{lcccc}
    \toprule
    \textbf{Algorithm} & \textbf{Original Algorithm} $\mu=2|\mathcal{PF}|$ & \textbf{Only SPU $\mu=2|\mathcal{PF}|$}   & \textbf{Only Archive $\mu=\frac{1}{4}|\mathcal{PF}|$} & \textbf{SPU+Archive $\mu=\frac{1}{4}|\mathcal{PF}|$} \\
    \midrule
    SMS-EMOA & 3.8339e+4 (2.18e+3) $\dagger$ & 3.7996e+4 (1.99e+3) $\dagger$ & 1.5185e+4 (1.63e+3)  & \textbf{1.4736e+4 (1.48e+3)} \\
    NSGA-II & 4.2965e+4 (1.85e+3) $\dagger$ & 3.5934e+4 (2.30e+3) $\dagger$ & 1.6889e+4 (1.82e+3) $\dagger$ & \textbf{1.2462e+4 (1.30e+3)} \\
    \bottomrule
  \end{tabular}}
  \small{``$\dagger$'' indicates that the result is significantly different from that of the SPU + archive algorithm (last column), at a 95\% confidence by the Wilcoxon rank-sum test.}
  \vspace{-0.4cm}
\end{table*}

\begin{table}[t!] % Replace with your desired label
  \centering
  \begin{tabular}{cccc}
    \toprule
    \textbf{size} $n$ & \textbf{SPU} & \textbf{Archive} & \textbf{SPU+Archive} \\
    \midrule
    10 & 5212.48 & 11733.52 & 5125.52 \\
    15 & 35653.92 & 57941.48 & 25808.24 \\
    20 & 106424.76 & 183201.24 & 65398.97 \\
    25 & 239294.88 & 415550.16 & 146384.08 \\
    30 & 479311.56 & 767354.84 & 284114.12 \\
    \bottomrule
  \end{tabular}
  % \vspace{-0.2cm}
  \caption{Average number of fitness evaluations over 200 independent runs for NSGA-II solving OneJumpZeroJump with $k=3$.}\label{tab:ojzj}
  \vspace{-0.1cm}
\end{table}

\begin{table}[t!] % Replace with your desired label
  \centering
  \begin{tabular}{cccc}
    \toprule
    \textbf{size} $n$ & \textbf{SPU} & \textbf{Archive} & \textbf{SPU+Archive} \\
    \midrule
    10 & 8522.88 & 14213.79 & 5413.985 \\
    15 & 63693.26 & 66607.01 & 26275.59 \\
    20 & 214833.25 & 158766.44 & 72884.91 \\
    25 & 579966.42 & 323110.73 & 140995.59 \\
    30 & 1176418.18 & 663510.10 & 262400.4 \\
    \bottomrule
  \end{tabular}
  % \vspace{-0.2cm}
  \caption{Average number of fitness evaluations over 200 independent runs for SMS-EMOA solving OneJumpZeroJump with $k=3$.}\label{tab:ojzj_sms}
  \vspace{-0.1cm}
\end{table}

MOTSP is a classic combinatorial optimization problem, which can be described as:
given a network $L=(V,C)$, where $V=\{v_{1},\dots,v_{D}\}$ is a set of $D$ nodes and $C=\{C_{j}: j\in \{1,\dots,m\}\}$ is a set of $m$ cost matrices
between nodes $(C_{j}: V \times V)$, the task is to find the Pareto optimal set of 
Hamiltonian cycles that minimize each of the $m$ cost objectives. For the MOTSP, the Pareto front size $|\mathcal{PF}|$ is known, 
allowing us to set commensurable population sizes $\mu$ for the compared MOEAs. We conduct experiments on two 100-cities instances (i.e., clusAB and kroAB) of MOTSP~\footnote{\url{https://webia.lip6.fr/~lustt/Research.html}} using NSGA-II and SMS-EMOA under four scenarios with varying population sizes: 1) the original algorithms, 2) using only SPU, 3) using only an archive, and 4) using SPU and an archive. For a fair comparison, each scenario used $1,000,000$ fitness evaluations over $30$ runs. Table~\ref{tab:igd_results_clusAB} and~\ref{tab:igd_results_kroAB} presents the results of a widely-used quality indicator, Inverted Generational Distance (IGD)~\cite{coello2004study} on the instance clusAB and kroAB, respectively. We used IGD since it can measure how well the obtained solution set represents the Pareto front~\cite{li2020evaluate}. 
Table~\ref{tab:igd_results_clusAB} shows that using SPU and an archive simultaneously can lead to the smallest value of IGD for both SMS-EMOA and NSGA-II, and statistically outperforms the other three scenarios. Moreover, with the SPU method, NSGA-II always achieves smaller IGD values than SMS-EMOA. 
Similar results can be observed with smaller population sizes. Specifically, Table~\ref{tab:igd_results_additional_1} and Table~\ref{tab:igd_results_additional_2}) reports the IGD values obtained under reduced population sizes—namely, $\mu = |\mathcal{PF}|/6$, $|\mathcal{PF}|/8$, $|\mathcal{PF}|/10$, and $|\mathcal{PF}|/12$—on the clusAB and kroAB instances, respectively. The results consistently show that the combination of SPU and an archive maintains superior performance.

\begin{table}[ht!]
  \centering
  \begin{tabular}{lll}
    \toprule
    \textbf{} & \textbf{Only Archive}  & \textbf{SPU+Archive } \\
    \midrule
    $\mu=|\mathcal{PF}|/6$ & 1.60e+4 (2.05e+3) & \textbf{8.29e+3 (4.32e+2)}  \\
    $\mu=|\mathcal{PF}|/8$ & 1.60e+4 (1.43e+3) & \textbf{8.92e+3 (1.23e+3)}  \\
    $\mu=|\mathcal{PF}|/10$ & 1.69e+4 (2.10e+3) & \textbf{1.26e+4 (1.08e+3)}  \\
    $\mu=|\mathcal{PF}|/12$ & 1.34e+4 (1.53e+3) & \textbf{1.14e+4 (1.65e+3)}  \\
    \bottomrule
  \end{tabular}
  \caption{The IGD~\protect\cite{coello2004study} results (mean and standard deviation) of the two variants of NSGA-II on the 100-cities MOTSP instance clusAB from TSPLIB~\protect\cite{reinelt_tsplibtraveling_1991}. For each MOEA, the best mean is highlighted in bold.}\label{tab:igd_results_additional_1}
\end{table}

\begin{table}[ht!]
  \centering
  \begin{tabular}{lll}
    \toprule
    \textbf{} & \textbf{Only Archive}  & \textbf{SPU+Archive } \\
    \midrule
    $\mu=|\mathcal{PF}|/6$ & 1.83e+4 (2.00e+3) & \textbf{1.22e+4 (1.77e+3)}  \\
    $\mu=|\mathcal{PF}|/8$ & 1.60e+4 (1.83e+3) & \textbf{1.16e+4 (1.41e+3)}  \\
    $\mu=|\mathcal{PF}|/10$ & 1.69e+4 (2.10e+3) & \textbf{1.26e+4 (1.08e+3)}  \\
    $\mu=|\mathcal{PF}|/12$ & 1.58e+4 (1.53e+3) & \textbf{1.37e+4 (1.83e+3)}  \\
    \bottomrule
  \end{tabular}
  \caption{The IGD~\protect\cite{coello2004study} results (mean and standard deviation) of the two variants of NSGA-II on the 100-cities MOTSP instance kroAB from TSPLIB~\protect\cite{reinelt_tsplibtraveling_1991}. For each MOEA, the best mean is highlighted in bold.}
  \label{tab:igd_results_additional_2}
\end{table}

% From the Table~\ref{tab:igd_results1}, our findings are: 1) Using SPU and an archive simultaneously can lead to the smallest value of IGD for both SMS-EMOA and NSGA-II, and significantly outperforms the other three scenarios 2) Under the same setting, NSGA-II always achieves smaller IGD values than SMS-EMOA. These results are consistent with our theoretical findings. For details about the setup and discussion, see Appendix~\ref{apx:experiment}.

% 1) SPU + archive $\gg$ only archive $\gg$ only SPU $>$ original algorithm (where ``$\gg$" indicates significantly better performance, and ``$>$" indicates slightly better performance). 2) SPU provided greater speedup for NSGA-II than for SMS-EMOA. These findings are broadly consistent with our theoretical results. For details about the setup and discussion, see Appendix~\ref{apx:experiment}.

\section{Conclusion}

This paper analytically shows that SPU in MOEAs needs an archive to better leverage its exploration ability. We prove that for NSGA-II and SMS-EMOA solving OneJumpZeroJump, introducing an archive for SPU can address the dilemma of large population size and may provide an extra exponential speedup. The reason is that SPU requires a large population to preserve the best solutions found, 
% while using an archive allows a small population size, which increases the probability of selecting inferior but undeveloped solutions.
while an archive enables a small population size, increasing the chance of selecting inferior but undeveloped solutions.
% , which are more promising in finding better solutions. 
We also find that the $(\mu+\mu)$ update mode may be more suitable for SPU than the $(\mu+1)$ update mode. 
% Another theoretical contribution of this paper lies in improving the running time bounds for SMS-EMOA and NSGA-II solving OneJumpZeroJump using SPU. 
Another contribution lies in improving the running time bounds for SMS-EMOA and NSGA-II solving OneJumpZeroJump using SPU.
These theoretical findings are empirically validated on OneJumpZeroJump and the MOTSP. We hope our work may provide some theoretical evidence for the attempts of designing new MOEAs that separate the exploration (via the evolutionary population) and elitist solution preservation (via an external archive), such as in non-elitist or less elitist MOEAs~\cite{tanabe2019non,liang2023non}.

% In this paper, we analytically show that the stochastic population update method need an archive to better leverage its exploration capability. We prove that for the well-known SMS-EMOA and NSGA-II solving \ojzj, incorporating an archive can salvage cases where the stochastic population update becomes ineffective and may provide an extra exponential speed-up. The intuitive reason is that introducing randomness into the population results in a larger population size to preserve good solutions. However, using an archive allows for a small population size, which increases the probability of selecting solutions that require further exploration. We also find that the $(\mu+\mu)$ update mode may be more suitable for the stochastic population update than the $(\mu+1)$ update mode. A theoretical contribution of our work is improving the running time bounds for SMS-EMOA and NSGA-II solving OneJumpZeroJump using stochastic methods. Finally, we hope to inspire the design of MOEAs where exploration and preservation are treated as separate processes, potentially leading to more effective algorithms.
\section*{Acknowledgements} 
This work was supported by the National Science and Technology Major Project (2022ZD0116600), the National Science Foundation of China (62276124), and the Fundamental Research Funds for the Central Universities (14380020). Chao Qian is the corresponding author.

%% The file named.bst is a bibliography style file for BibTeX 0.99c
\bibliographystyle{named}
\bibliography{SPU_archive}

\end{document}